\newtheorem{definition}{Definition}
\newtheorem{lemma}{Lemma}
\newtheorem{proposition}{Proposition}
\newtheorem{assumption}{Assumption}
\newcommand{\xB}{\mathbf{x}}
\newcommand{\yB}{\mathbf{y}}
\newcommand{\zB}{\mathbf{z}}
\newcommand{\Acal}{\mathcal{A}}
\newcommand{\Ecal}{\mathcal{E}}
\newcommand{\Fcal}{\mathcal{F}}
\newcommand{\Scal}{\mathcal{S}}
\newcommand{\Tcal}{\mathcal{T}}
\newcommand{\Xcal}{\mathcal{X}}
\newcommand{\Ycal}{\mathcal{Y}}
\newcommand{\Zcal}{\mathcal{Z}}
\newcommand{\Ebb}{\mathbb{E}}
\newcommand{\given}{\,\vert\,}
\newcommand{\definedas}{\vcentcolon=}
\Crefname{section}{Sec.}{Secs.}
\Crefname{section}{Section}{Sections}
\Crefname{table}{Table}{Tables}
\Crefname{table}{Tab.}{Tabs.}
\newcommand{\name}{OoD-Bench}
\begin{document}

%%%%%%%%% TITLE - PLEASE UPDATE
\title{\name: Quantifying and Understanding Two Dimensions of Out-of-Distribution Generalization}
\author{Nanyang Ye$^{1}$\thanks{Nanyang Ye and Kaican Li are the joint first authors. Nanyang Ye is the corresponding author.} , Kaican Li$^{2}$\footnotemark[1] , Haoyue Bai$^{3}$\thanks{Work is done during Haoyue and Runpeng's internships at Shanghai Jiao Tong University.} , Runpeng Yu$^{4}$\footnotemark[2] , Lanqing Hong$^{2}$, Fengwei Zhou$^{2}$, \\ Zhenguo Li$^{2}$, Jun Zhu$^{4}$\\
$^{1}$ Shanghai Jiao Tong University \quad $^{2}$ Huawei Noah's Ark Lab\\
$^{3}$ Hong Kong University of Science and Technology \quad $^{4}$ Tsinghua University \\
{\tt\small ynylincoln@sjtu.edu.cn,  mjust.lkc@gmail.com,  hbaiaa@cse.ust.hk, yrp19@mails.tsinghua.edu.cn,}\\
{\tt\small \{honglanqing, zhoufengwei, li.zhenguo\}@huawei.com, dcszj@mail.tsinghua.edu.cn}
}

\maketitle

%%%%%%%%% ABSTRACT
\begin{abstract}
Deep learning has achieved tremendous success with independent and identically distributed (i.i.d.) data. However, the performance of neural networks often degenerates drastically when encountering out-of-distribution (OoD) data, \ie, when training and test data are sampled from different distributions.
While a plethora of algorithms have been proposed for OoD generalization, our understanding of the data used to train and evaluate these algorithms remains stagnant.
In this work, we first identify and measure two distinct kinds of distribution shifts that are ubiquitous in various datasets.
Next, through extensive experiments, we compare OoD generalization algorithms across two groups of benchmarks, each dominated by one of the distribution shifts, revealing their strengths on one shift as well as limitations on the other shift. Overall, we position existing datasets and algorithms from different research areas seemingly unconnected into the same coherent picture. It may serve as a foothold that can be resorted to by future OoD generalization research.
Our code is available at \url{https://github.com/ynysjtu/ood_bench}.
\end{abstract}

%%%%%%%%% BODY TEXT
\section{Introduction}
Deep learning has been widely adopted in various applications of computer vision~\cite{he2016deep} and natural language processing~\cite{devlin2018bert} with great success, under the implicit assumption that the training and test data are drawn from the same distribution, which is known as the independent and identically distributed (i.i.d.) assumption.
While neural networks often exhibit super-human generalization performance on the training distribution, they can be susceptible to minute changes in the test distribution~\cite{recht2019imagenet, szegedy2014intriguing}.
This is problematic because sometimes true underlying data distributions are significantly underrepresented or misrepresented by the limited training data at hand.
In the real world, such mismatches are commonly observed~\cite{koh2020wilds, geirhos2020shortcut}, and have led to significant performance drops in many deep learning algorithms~\cite{bahng2019rebias, krueger2020outofdistribution, mancini2020towards}.
As a result, the reliability of current learning systems is substantially undermined in critical applications such as medical imaging~\cite{castro2020causality, albadawy2018deep}, autonomous driving~\cite{dai2018dark, volk2019towards, alcorn2019strike, sato2020security, michaelis2019benchmarking}, and security systems~\cite{huang2020survey}.

\emph{Out-of-Distribution (OoD) Generalization}, the task of generalizing under such distribution shifts, has been fragmentarily researched in different areas, such as Domain Generalization (DG)~\cite{blanchard2011generalizing, muandet2013domain, wang2021generalizing, zhou2021domain}, Causal Inference~\cite{pearl2010causal, peters2017elements}, and Stable Learning~\cite{zhang2021deep}.
In the setting of OoD generalization, models usually have access to multiple training datasets of the same task collected in different environments.
The goal of OoD generalization algorithms is to learn from these different but related training environments and then extrapolate to unseen test environments~\cite{arjovsky2020out, shen2021towards}.
Driven by this motivation, numerous algorithms have been proposed over the years~\cite{zhou2021domain}, each claimed to have surpassed all its precedents on a particular genre of benchmarks.
However, a recent work~\cite{gulrajani2021in} suggests that the progress made by these algorithms might have been overestimated---most of the advanced learning algorithms tailor-made for OoD generalization are still on par with the classic Empirical Risk Minimization~(ERM)~\cite{vapnik1998statistical}.

\begin{figure*}[t]
    \centering
    \includegraphics[width=\linewidth]{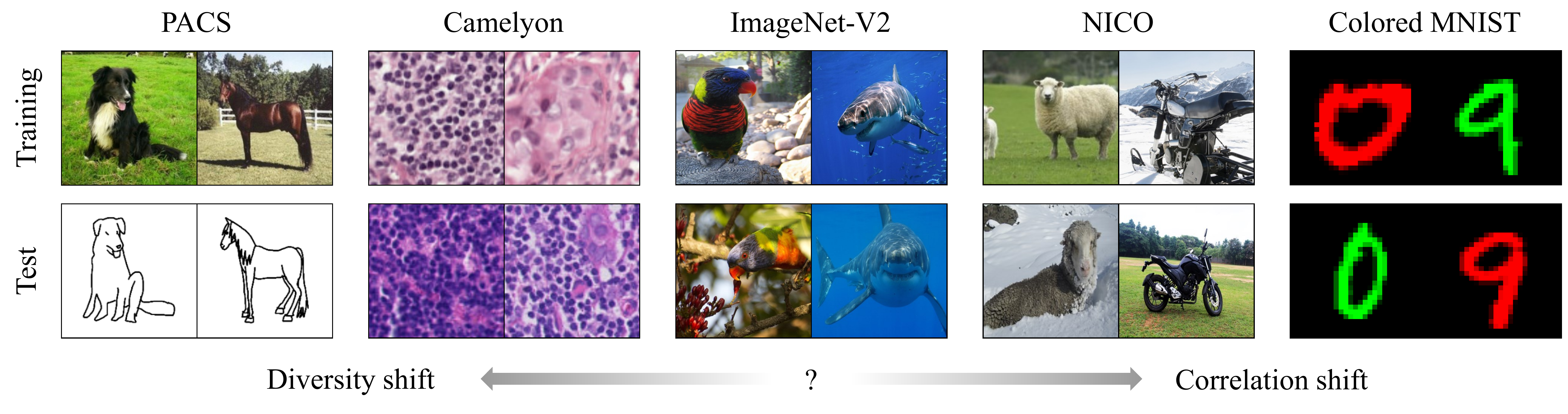}
    \caption{Examples of image classification datasets demonstrating different kinds of distribution shifts.
    While it is clear that the datasets at both ends exhibit apparent distribution shifts, in the middle, it is hard to distinguish the differences in distribution between the training dataset and the test dataset (\eg, ImageNet~\cite{deng2009imagenet} and ImageNet-V2~\cite{recht2019imagenet}), which represent a large body of realistic OoD datasets. This motivates us to quantify the distribution shifts in these OoD datasets.}
    \label{fig:samples}
\end{figure*}

\textbf{In this work, we provide a quantification for the distribution shift exhibited in OoD datasets from different research areas and evaluate the effectiveness of OoD generalization algorithms on these datasets, revealing a possible reason as to why these algorithms appear to be no much better than ERM, which is left unexplained in previous work~\cite{gulrajani2021in}.}
We find that incumbent datasets exhibiting distribution shifts can be generally divided into two categories of different characteristics, whereas the majority of the algorithms are only able to surpass ERM in at most one of the categories.
We hypothesize that the phenomenon is due to the influence of two distinct kinds of distribution shift, namely \emph{diversity shift} and \emph{correlation shift}, while preexisting literature often focuses on merely one of them.
The delineation of diversity and correlation shift provides us with a unified picture for understanding distribution shifts.
Based on the findings and analysis in this work, we make three recommendations for future OoD generalization research:

\begin{itemize}
    \item Evaluate OoD generalization algorithms comprehensively on two types of datasets, one dominated by diversity shift and the other dominated by correlation shift.
    We provide a method to estimate the strength of these two distribution shifts on any labeled dataset.
    \item \noindent Investigate the nature of distribution shift in OoD problems before designing algorithms since the optimal treatment for different kinds of distribution shift may be different.
    \item Design large-scale datasets that more subtly capture real-world distribution shifts as imperceptible distribution shifts can also be conspicuous to neural networks.
\end{itemize}

\section{Diversity Shift and Correlation Shift}
\label{sec:definition}

\begin{figure*}[t]
    \centering
    \begin{subfigure}{0.25\textwidth}
        \centering
        \begin{adjustbox}{max width=0.8\textwidth}
            \begin{tikzpicture}
                \begin{scope}[every node/.style={circle, draw, minimum size=2.5em}]
                    \node (A) at (-1.4, 0) {$X$};
                    \node (B) at ( 1.4, 0) {$Y$};
                    \node (C) at ( 0, 1.5) {$Z_1$};
                    \node (D) at ( 0,-1.5) {$Z_2$};
                \end{scope}
                \begin{scope}[>={Stealth[black]},
                          every edge/.style={draw=black}]
                    \path [->] (C) edge node {} (A);
                    \path [->] (D) edge node {} (A);
                    \path [->] (C) edge node {} (B);
                \end{scope}
            \end{tikzpicture}
        \end{adjustbox}
        \vspace{17pt}
        \caption{Causal graph depicting the causal influence among the concerned variables.}
        \label{fig:causal-graph}
    \end{subfigure}
    \hfill
    \begin{subfigure}{0.7\textwidth}
        \centering
        \includegraphics[width=\textwidth]{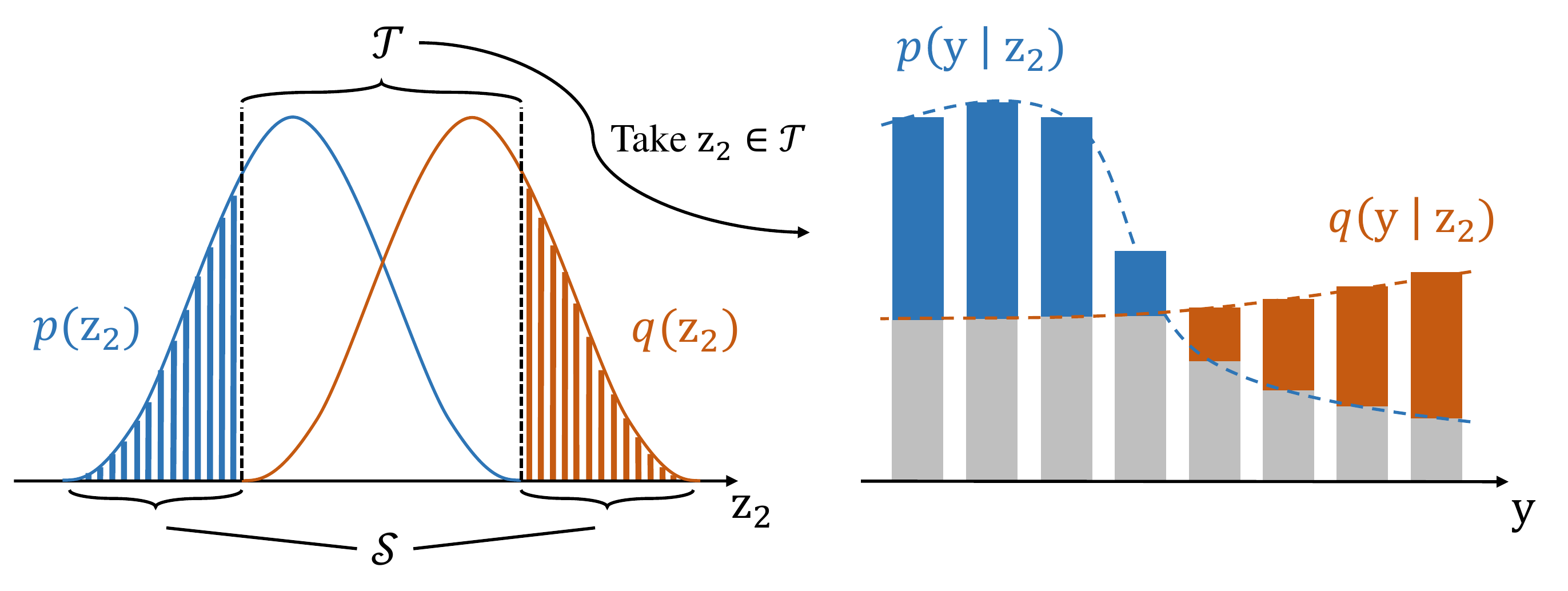}
        \caption{Illustration of diversity shift and correlation shift. Diversity shift amounts to half of the summed area of the colored regions in the left figure. Correlation shift is an integral over $\Tcal$, where every integrand can be seen as the summed heights of the colored bars in the right figure then weighted by the square root of $p(\zB_2) \cdot q(\zB_2)$.}
        \label{fig:div-cor-illus}
    \end{subfigure}
    \caption{Explanatory illustrations for diversity and correlation shift.  Diversity shift is defined by the support set's difference of the latent environment's distribution while correlation shift is defined by the probability density function's difference on the same support set.}
\end{figure*}

Normally, datasets such as VLCS~\cite{torralba2011unbiased} and PACS~\cite{li2017deeper} that consist of multiple domains are used to train and evaluate DG models.
In these datasets, each domain represents a certain spectrum of diversity in data.\footnote{An implicit assumption of DG over the years is that each domain is \emph{distinct} from one another, which is a major difference between DG and OoD generalization as the latter considers a more general setting.}
During experiments, these domains are further grouped into training and test domains, leading to the diversity shift.
Although extensive research efforts have been dedicated to the datasets dominated by diversity shift, it is not until recently that \cite{arjovsky2019invariant} draws attention to another challenging generalization problem stemmed from spurious correlations. Colored MNIST, a variant of MNIST~\cite{lecun1998gradient}, is constructed by coloring the digits with either red or green to highlight the problem. The colored digits are arranged into training and test environments such that the labels and colors are strongly correlated, but the correlation flips across the environments, creating correlation shift.

As shown in \Cref{fig:samples}, diversity and correlation shift are of clearly different nature. At the extremes, discrepancies between training and test environments become so apparent, causing great troubles for algorithms trying to generalize~\cite{li2017deeper, arjovsky2019invariant}. Interestingly, in some real-world cases such as ImageNet versus ImageNet-V2 where the discrepancy is virtually imperceptible, neural networks are still unable to generalize satisfactorily, of which the reason is not fully understood~\cite{recht2019imagenet}. In \Cref{fig:estimation}, our estimation of diversity and correlation shift shed some light on the issue---there is a non-trivial degree of correlation shift between the original ImageNet and the variant. Besides, other OoD datasets have also shown varying degrees of diversity and correlation shift.

\paragraph{Formal Analysis.}
In the setting of supervised learning, every input $\xB \in \Xcal$ is assigned with a label $\yB \in \Ycal$ by some fixed labeling rule $f: \Xcal \to \Ycal$.
The inner mechanism of $f$ usually depends on a particular set of features $\Zcal_1$, whereas the rest of the features $\Zcal_2$ are not causal to prediction. For example, we assign the label ``airplane'' to the image of an airplane regardless of its color or whether it is landed or flying.
The causal graph in \Cref{fig:causal-graph} depicts the interplay among the underlying random variables of our model: the input variable $X$ is determined by the latent variables $Z_1$ and $Z_2$, whereas the target variable $Y$ is determined by $Z_1$ alone. Similar graphs can be found in \cite{ahuja2021empirical,liu2021learning,mitrovic2021representation,ouyang2021causality}.

Given a labeled dataset, consider its training environment $\Ecal_{\textnormal{tr}}$ and test environment $\Ecal_{\textnormal{te}}$ as distributions with probability functions $p$ and $q$ respectively.
For ease of exposition, we assume no label shift \cite{azizzadenesheli2018regularized} across the environments, \ie $p(\yB) = q(\yB)$ for every $\yB \in \Ycal$.\footnote{As a side note, by this assumption we do \emph{not} ignore the existence of label shift in datasets. In practice, datasets with label shift can be made to satisfy this assumption by techniques such as sample reweighting.}
Without loss of generality, we further assume that $\Ecal_{\textnormal{tr}}$ and $\Ecal_{\textnormal{te}}$ share the same labeling rule $f$, complementing the causal graph. To put it in the language of \emph{causality} \cite{pearl2000causality}, it means that the \emph{direct cause} of $Y$ (which is $Z_1$) is observable in both environments and the causal mechanism that $Z_1$ exerts on $Y$ is stable at all times.
Formally, it dictates the following property for every $\zB \in \Zcal_1$:%
\begin{equation}
    p(\zB) \cdot q(\zB) \neq 0 \:\land\: \forall\, \yB \in \Ycal: p(\yB \given \zB) = q(\yB \given \zB).
\end{equation}
\noindent The existence of such invariant features makes OoD generalization possible. On the other hand, the presence of $\zB \in \Zcal_2$ possessing the opposite property,
\begin{equation}
    \label{eq:pqz2}
    p(\zB) \cdot q(\zB) = 0 \:\lor\: \exists\, \yB \in \Ycal: p(\yB \given \zB) \neq q(\yB \given \zB),
\end{equation}
makes OoD generalization challenging.
From \eqref{eq:pqz2}, we can see that $\Zcal_2$ consists of two kinds of features.
\textit{Intuitively, diversity shift stems from the first kind of features in $\Zcal_2$ since the diversity of data is embodied by novel features not shared by the environments; whereas correlation shift is caused by the second kind of  features in $\Zcal_2$ which is spuriously correlated with some $\yB$.}
Based on this intuition, we partition $\Zcal_2$ into two subsets,
\begin{equation}
    \label{eq:ST}
    \begin{split}
        \Scal &\definedas \{ \zB \in \Zcal_2 \mid p(\zB) \cdot q(\zB) = 0 \}, \\
        \Tcal &\definedas \{ \zB \in \Zcal_2 \mid p(\zB) \cdot q(\zB) \neq 0 \},
    \end{split}
\end{equation}
that are respectively responsible for diversity shift and correlation shift between the environments.
We then define the quantification formula of the two shifts as follows:
\begin{definition}[\textbf{Diversity Shift and Correlation Shift}]
    Given $\Scal$ and $\Tcal$ defined in \eqref{eq:ST}, the proposed quantification formula of diversity shift and correlation shift between two data distributions $p$ and $q$ is given by
    \label{def:div-cor}
    \begin{equation*}
        \begin{split}
            D_\textnormal{div}(p, q) &\definedas \frac{1}{2} \int_{\Scal} |\:\! p(\zB) - q(\zB) |\, d \zB, \\
            D_\textnormal{cor}(p, q) &\definedas \frac{1}{2} \int_{\Tcal} \sqrt{p(\zB) \cdot q(\zB)} \sum_{\yB \in \Ycal} {\big|\:\! p(\yB \given \zB) - q(\yB \given \zB) \big|}\, d \zB,
        \end{split}
    \end{equation*}
    where we assume $\Ycal$ to be discrete.
\end{definition}
\noindent \Cref{fig:div-cor-illus} illustrates the above definition when $\zB$ is unidimensional.
It can be proved that $D_\textnormal{div}$ and $D_\textnormal{cor}$ are always bounded within $[0, 1]$ (see \textbf{Proposition~\ref{prop:zero-one-bound}} in Appendix~\ref{app:proofs}).
In particular, the square root in the formulation of correlation shift serves as a coefficient regulating the integrand because features that hardly appear in either environment should have a small contribution to the correlation shift overall.
Nevertheless, we are aware that these are not the only viable formulations, yet they produce intuitively reasonable and numerically stable results even when estimated by a simple method described next.

\paragraph{Practical estimation.}

\begin{figure}
\centering
\includegraphics[width=\linewidth]{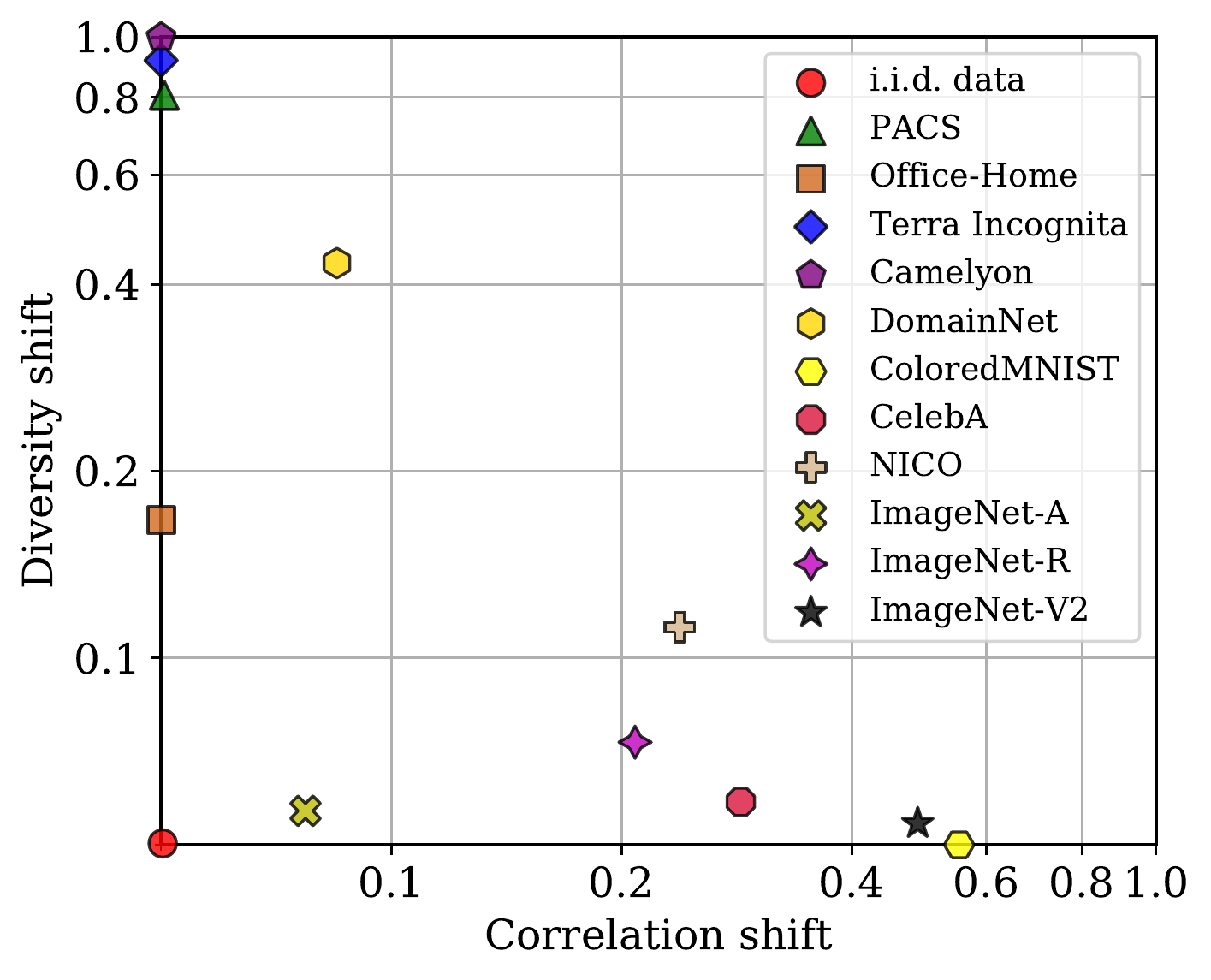}
\caption{Estimation of diversity and correlation shift in various datasets. For ImageNet variants, the estimates are computed with respect to the original ImageNet. See Appendix~\ref{app:estimation-results} for the results in numeric form with error bars.}
\label{fig:estimation}
\end{figure}

Given a dataset sampled from $\Ecal_\textnormal{tr}$ and another dataset (of equal size) sampled from $\Ecal_\textnormal{te}$, a neural network is first trained to discriminate the environments.
The network consists of a feature extractor $g: \Xcal \to \Fcal$ and a classifier $h: \Fcal \times \Ycal \to [0, 1]$, where $\Fcal$ is some learned representation of $\Xcal$.
The mapping induces two joint distributions over $\Xcal \times \Ycal \times \Fcal$, one for each environment, with probability functions denoted by $\Hat{p}$ and $\Hat{q}$.
% Denote by $\Hat{p}$ and $\Hat{q}$ the probability functions of the joint distribution over $\Xcal \times \Ycal \times \Fcal$ induced by $g$.
For every example from either $\Ecal_\textnormal{tr}$ or $\Ecal_\textnormal{te}$, the network tries to tell which environment the example is actually sampled from, in order to minimize the following objective:
\begin{equation}
    \label{eq:objective}
    \mathbb{E}_{(\xB, \yB) \sim \Ecal_\textnormal{tr}}\ell(\Hat{e}_{\xB, \yB}, 0) \,+\, \mathbb{E}_{(\xB, \yB) \sim \Ecal_\textnormal{te}}\ell(\Hat{e}_{\xB, \yB}, 1),
\end{equation}
where $\Hat{e}_{\xB, \yB} = h(g(\xB), \yB)$ is the predicted environment and $\ell$ is some loss function.
The objective forces $g$ to extract those features whose joint distribution with $Y$ varies across the environments so that $h$ could make reasonably accurate predictions. This is formalized by the theorem below.
\begin{restatable}[]{theorem}{feature}
    \label{thm:feature-space}
    The classification accuracy of a network trained to discriminate two environments is bounded above by $\frac{1}{2} \int_{\Xcal} \max \{p(\xB), q(\xB)\}$, as the data size tends to infinity. This optimal performance is attained only when the following condition holds: for every $\xB \in \Xcal$ that is \emph{not} i.i.d\onedot in the two environments, \ie $p(\xB) \neq q(\xB)$, there exists some $\yB \in \Ycal$ such that $\Hat{p}(\yB, \zB) \neq \Hat{q}(\yB, \zB)$ where $\zB = g(\xB)$.
\end{restatable}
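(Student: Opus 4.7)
The plan is to treat the trained network as an asymptotic Bayes classifier for the binary task of discriminating the two environments (with equal priors $\tfrac{1}{2}$ each), and then invoke the data-processing inequality for total variation to compare its reachable accuracy to the Bayes accuracy on the raw $(\xB,\yB)$. First I would exploit the shared, deterministic labeling rule $f$: both joints collapse onto the graph of $f$, so $p(\xB,\yB) = p(\xB)\,\mathbf{1}[\yB = f(\xB)]$ and analogously for $q$. Consequently the Bayes-optimal accuracy on $(\xB,\yB)$ is
\[
\tfrac{1}{2}\int_{\Xcal}\sum_{\yB}\max\{p(\xB,\yB),\,q(\xB,\yB)\}\,d\xB \;=\; \tfrac{1}{2}\int_{\Xcal}\max\{p(\xB),\,q(\xB)\}\,d\xB,
\]
which, using $\max\{a,b\} = \tfrac{1}{2}(a+b+|a-b|)$, rewrites as $\tfrac{1}{2} + \tfrac{1}{2}\mathrm{TV}(p,q)$.

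Next I would observe that $h$ only ever receives $(\zB,\yB) = (g(\xB),\yB)$, so in the infinite-data limit (with $\ell$ a proper loss and the network class sufficiently expressive) the attainable accuracy is exactly the Bayes accuracy for discriminating the pushforward distributions $\Hat{p}(\yB,\zB)$ and $\Hat{q}(\yB,\zB)$, namely $\tfrac{1}{2} + \tfrac{1}{2}\mathrm{TV}(\Hat{p},\Hat{q})$. Because $(X,Y)\mapsto(g(X),Y)$ is a deterministic measurable map, the data-processing inequality for total variation gives $\mathrm{TV}(\Hat{p},\Hat{q}) \leq \mathrm{TV}(p,q)$, which is precisely the claimed upper bound.

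For the necessary condition I would chase the equality case via the identity
\[
\Hat{p}(\yB,\zB) - \Hat{q}(\yB,\zB) \;=\; \int_{g^{-1}(\zB)\cap f^{-1}(\yB)}\bigl(p(\xB') - q(\xB')\bigr)\,d\xB'.
\]
Equality in the DPI step holds exactly when $p-q$ has an almost-everywhere constant sign on each fiber $g^{-1}(\zB)\cap f^{-1}(\yB)$. Hence if the upper bound is attained and $p(\xB)\neq q(\xB)$ for some $\xB$, then on the fiber through $\xB$ the integrand is nonzero and sign-constant, so the fiber integral is itself nonzero; taking $\yB := f(\xB)$ therefore produces the $\yB$ required by the theorem's condition at $\zB = g(\xB)$.

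The main obstacle I anticipate is precisely this equality analysis: the theorem's condition (``there exists some $\yB$'' for every discriminative $\xB$) is strictly weaker than the fiberwise-constant-sign characterization of TV-preservation, so the argument must deliver necessity (matching the ``only when'' phrasing) without overreaching to an iff. A secondary technical nuisance is the passage from the finite-sample training objective \eqref{eq:objective} to an asymptotic Bayes-optimal classifier, which I would handle by assuming $\ell$ is a proper scoring rule (e.g.\ cross-entropy) and invoking standard consistency results for $(g,h)$ without explicit calculation.
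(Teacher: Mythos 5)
Your proposal is correct in substance but reaches the result by a genuinely different route than the paper. The paper's proof is elementary and constructive: a lemma computes, for balanced environments, the posterior probability that a prediction $t(\xB)$ is correct, from which the accuracy $\frac{1}{2}\int_{\Xcal}\bigl[(1-t(\xB))p(\xB)+t(\xB)q(\xB)\bigr]$ is maximized pointwise by the explicit Bayes rule $t^\ast$ (predict the more likely environment), giving the bound $\frac{1}{2}\int_{\Xcal}\max\{p(\xB),q(\xB)\}$; necessity of the condition is then argued by exhibiting, whenever $\Hat{p}(\yB,\zB)=\Hat{q}(\yB,\zB)$ for all $\yB$ yet $p(\xB)>q(\xB)$, a companion point $\xB'$ in the same fiber (same $f$-label and same $g$-feature) with $p(\xB')<q(\xB')$, so that $h$ is forced to output $t(\xB)=t(\xB')$ while $t^\ast$ disagrees. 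You instead identify both sides with total-variation quantities, $\frac{1}{2}+\frac{1}{2}\mathrm{TV}(p,q)$ for the raw input and $\frac{1}{2}+\frac{1}{2}\mathrm{TV}(\Hat{p},\Hat{q})$ for the representation, apply the data-processing inequality under the deterministic map $(\xB,\yB)\mapsto(g(\xB),\yB)$, and read off the equality case as fiberwise sign-constancy of $p-q$; your fiber-integral identity is exactly the mechanism behind the paper's existence of $\xB'$, so the two arguments are equivalent at their core, but yours buys the sharper intermediate bound $\frac{1}{2}+\frac{1}{2}\mathrm{TV}(\Hat{p},\Hat{q})$ and a cleaner characterization of when the DPI step is tight, whereas the paper's buys explicitness (the optimal $t^\ast$ and per-example correctness probability) with no appeal to DPI. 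Two small remarks: the Bayes-consistency caveat you flag is unnecessary, since both the upper bound and the ``only when'' direction need only the inequality that any $h$ acting on $(\zB,\yB)$ is at most Bayes-optimal for that representation; and your step from ``$p(\xB)\neq q(\xB)$ at a point'' to ``the fiber integral is nonzero'' tacitly ignores measure-zero sets, but the paper's own proof has precisely the same looseness, so this does not put you below its level of rigor.
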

The proof is provided in \cref{app:proofs}.
After obtaining the features $\Fcal$ extracted by $g$, we use Kernel Density Estimation~(KDE)~\cite{rosenblatt1956remarks, parzen1962onestimation} to estimate $\Hat{p}$ and $\Hat{q}$ over $\Fcal$.
Subsequently, $\Fcal$ is partitioned by whether $\Hat{p}(\zB) \cdot \Hat{q}(\zB)$ is close to zero, in correspondence to $\Scal$ and $\Tcal$, into two sets of features that are responsible for diversity and correlation shift respectively.
The integrals in Definition~\ref{def:div-cor} are then approximated by Monte Carlo Integration under importance sampling \cite{newman1999monte}.
A caveat in evaluating the term $|p(\yB \given \zB) - q(\yB \given \zB)|$ in $D_\textnormal{cor}(p, q)$ is that the conditional probabilities are computationally intractable for $\zB$ is continuous.
Instead, the term is computed by the following equivalent formula as an application of Bayes' theorem:
\begin{equation}
    \left|\frac{\Hat{p}(\yB) \cdot \Hat{p}(\zB \given \yB)}{\Hat{p}(\zB)} - \frac{\Hat{q}(\yB) \cdot \Hat{q}(\zB \given \yB)}{\Hat{q}(\zB)}\right|,
\end{equation}
where $\Hat{p}(\zB \given \yB)$ and $\Hat{q}(\zB \given \yB)$ can be approximated individually for every $\yB \in \Ycal$ again by KDE.
See Appendix~\ref{app:estimation-method} for more details of our method including pseudo codes of the whole procedure.

We have also shown that in theory the extracted features would converge to a unique solution as the network width grows to infinity using Neural Tangent Kernel \cite{jacot2018neural}.
It suggests that as long as the network has sufficient capacity, we can always obtain similar results within a small error bound.
To empirically verify this, we have also experimented with different network architectures which demonstrates the stability of our estimation (see Appendix~\ref{app:arch}).

The results in \Cref{fig:estimation} are obtained by the aforementioned method.
Most of the existing OoD datasets lie over or near the axes, dominated by one kind of shift.
For datasets under unknown distribution shift such as ImageNet-A~\cite{hendrycks2019natural}, ImageNet-R~\cite{hendrycks2020many}, and ImageNet-V2, our method successfully decomposes the shift into the two dimensions of diversity and correlation, and therefore one may choose the appropriate algorithms based on the estimation.
As shown by our benchmark results in the next section, such choices might be crucial as most OoD generalization algorithms do not perform equally well on two groups of datasets, one dominated by diversity shift and the other dominated by correlation shift.

\section{Experiment}
\label{sec:experiment}
Previously, we have numerically positioned OoD datasets in the two dimensions of distribution shift. In this section, we run algorithms on these datasets to reveal the two-dimensional trend for existing datasets and algorithms. All experiments are conducted on Pytorch 1.4 with Tesla V100 GPUs.
Our code for the following benchmark experiments is modified from the DomainBed~\cite{gulrajani2021in} code suite.

\subsection{Benchmark}
\label{sec:benchmark}

%%%%%%%%%%%%% 05/23 update %%%%%%%%%%%%%
\begin{table*}[t]
    \centering
    % \rowcolors{1}{blue!0}{blue!4}
	\begin{threeparttable}
        \begin{tabularx}{\linewidth}{l*{4}{X}rr}
            \toprule
            % \rowcolor{blue!0}
            \textbf{Algorithm} & \textbf{PACS} & \textbf{OfficeHome} & \textbf{TerraInc} & \textbf{Camelyon17} & \textbf{Average} & \textbf{Ranking score}\\
            \midrule
            RSC~\cite{huang2020self} &
                82.8 $\pm$ 0.4\tnote{$\uparrow$} &
                62.9 $\pm$ 0.4\tnote{$\downarrow$} &
                43.6 $\pm$ 0.5\tnote{$\uparrow$} &
                94.9 $\pm$ 0.2\tnote{$\uparrow$} &
                71.1 &
                \texttt{+2} \\
            MMD~\cite{li2018domain} & 
                81.7 $\pm$ 0.2\tnote{$\uparrow$} &
                63.8 $\pm$ 0.1\tnote{$\uparrow$} &
                38.3 $\pm$ 0.4\tnote{$\downarrow$} &
                94.9 $\pm$ 0.4\tnote{$\uparrow$} &
                69.7 &
                \texttt{+2} \\
            SagNet~\cite{nam2019reducing} &
                81.6 $\pm$ 0.4\tnote{$\uparrow$} &
                62.7 $\pm$ 0.4\tnote{$\downarrow$} &
                42.3 $\pm$ 0.7\hphantom{\tnote{$\downarrow$}} &
                95.0 $\pm$ 0.2\tnote{$\uparrow$} &
                70.4 &
                \texttt{+1} \\
            \textbf{ERM}\vphantom{\tnote{$\downarrow$}}~\cite{vapnik1998statistical} &
                81.5 $\pm$ 0.0\hphantom{\tnote{$\downarrow$}} &
                63.3 $\pm$ 0.2\hphantom{\tnote{$\downarrow$}} &
                42.6 $\pm$ 0.9\hphantom{\tnote{$\downarrow$}} &
                94.7 $\pm$ 0.1\hphantom{\tnote{$\downarrow$}} &
                70.5 &
                \texttt{ 0} \\
            IGA~\cite{koyama2020out} &
                80.9 $\pm$ 0.4\tnote{$\downarrow$} &
                63.6 $\pm$ 0.2\tnote{$\uparrow$} &
                41.3 $\pm$ 0.8\tnote{$\downarrow$} &
                95.1 $\pm$ 0.1\tnote{$\uparrow$} &
                70.2 &
                \texttt{ 0} \\
            CORAL~\cite{sun2016deep} &
                81.6 $\pm$ 0.6\tnote{$\uparrow$} &
                63.8 $\pm$ 0.3\tnote{$\uparrow$} &
                38.3 $\pm$ 0.7\tnote{$\downarrow$} &
                94.2 $\pm$ 0.3\tnote{$\downarrow$} &
                69.5 &
                \texttt{ 0} \\ 
            IRM~\cite{arjovsky2019invariant} &
                81.1 $\pm$ 0.3\tnote{$\downarrow$} &
                63.0 $\pm$ 0.2\tnote{$\downarrow$} &
                42.0 $\pm$ 1.8\hphantom{\tnote{$\downarrow$}} &
                95.0 $\pm$ 0.4\tnote{$\uparrow$} &
                70.3 &
                \texttt{-1} \\
            VREx~\cite{krueger2020outofdistribution} &
                81.8 $\pm$ 0.1\tnote{$\uparrow$} &
                63.5 $\pm$ 0.1\hphantom{\tnote{$\downarrow$}} &
                40.7 $\pm$ 0.7\tnote{$\downarrow$} &
                94.1 $\pm$ 0.3\tnote{$\downarrow$} &
                70.0 &
                \texttt{-1} \\
            GroupDRO~\cite{sagawa2019distributionally} &
                80.4 $\pm$ 0.3\tnote{$\downarrow$} &
                63.2 $\pm$ 0.2\hphantom{\tnote{$\downarrow$}} &
                36.8 $\pm$ 1.1\tnote{$\downarrow$} &
                95.2 $\pm$ 0.2\tnote{$\uparrow$} &
                68.9 &
                \texttt{-1} \\
            ERDG~\cite{zhao2020domain} &
                80.5 $\pm$ 0.5\tnote{$\downarrow$} &
                63.0 $\pm$ 0.4\tnote{$\downarrow$} &
                41.3 $\pm$ 1.2\tnote{$\downarrow$} &
                95.5 $\pm$ 0.2\tnote{$\uparrow$}   &
                70.1 &
                \texttt{-2} \\
            DANN~\cite{ganin2016domain} &
                81.1 $\pm$ 0.4\tnote{$\downarrow$} &
                62.9 $\pm$ 0.6\tnote{$\downarrow$} &
                39.5 $\pm$ 0.2\tnote{$\downarrow$} &
                94.9 $\pm$ 0.0\tnote{$\uparrow$} &
                69.6 &
                \texttt{-2} \\
            MTL~\cite{blanchard2017domain} &
                81.2 $\pm$ 0.4\tnote{$\downarrow$} &
                62.9 $\pm$ 0.2\tnote{$\downarrow$} &
                38.9 $\pm$ 0.6\tnote{$\downarrow$} &
                95.0 $\pm$ 0.1\tnote{$\uparrow$} &
                69.5 &
                \texttt{-2} \\
            Mixup~\cite{yan2020improve} &
                79.8 $\pm$ 0.6\tnote{$\downarrow$} &
                63.3 $\pm$ 0.5\hphantom{\tnote{$\downarrow$}} &
                39.8 $\pm$ 0.3\tnote{$\downarrow$} &
                94.6 $\pm$ 0.3\hphantom{\tnote{$\downarrow$}} &
                69.4 &
                \texttt{-2} \\
            ANDMask~\cite{parascandolo2021learning} &
                79.5 $\pm$ 0.0\tnote{$\downarrow$} &
                62.0 $\pm$ 0.3\tnote{$\downarrow$} &
                39.8 $\pm$ 1.4\tnote{$\downarrow$}&
                95.3 $\pm$ 0.1\tnote{$\uparrow$} &
                69.2 &
                \texttt{-2} \\
            ARM~\cite{zhang2020adaptive} &
                81.0 $\pm$ 0.4\tnote{$\downarrow$} &
                63.2 $\pm$ 0.2\hphantom{\tnote{$\downarrow$}} &
                39.4 $\pm$ 0.7\tnote{$\downarrow$} &
                93.5 $\pm$ 0.6\tnote{$\downarrow$} &
                69.3 &
                \texttt{-3} \\
            MLDG~\cite{li2018learning} &
                73.0 $\pm$ 0.4\tnote{$\downarrow$} &
                52.4 $\pm$ 0.2\tnote{$\downarrow$} &
                27.4 $\pm$ 2.0\tnote{$\downarrow$} &
                91.2 $\pm$ 0.4\tnote{$\downarrow$} &
                61.0 &
                \texttt{-4} \\
            \midrule
            % \rowcolor{blue!0}
            \textbf{Average} & 80.7 & 62.5 & 39.8 & 94.6 & 69.4 & -- \\
            \bottomrule
        \end{tabularx}
    \end{threeparttable}
    \caption{Performance of ERM and OoD generalization algorithms on datasets \emph{dominated by diversity shift}. Every symbol $\downarrow$ denotes a score of \texttt{-1}, and every symbol $\uparrow$ denotes a score of \texttt{+1}; otherwise the score is \texttt{0}. Adding up the scores across all datasets produces the ranking score for each algorithm.}
    \label{tab:benchmark-div}
    \vspace{-1mm}
\end{table*}

% Our code for the following benchmark experiments is released at  \url{https://github.com/ynysjtu/ood_bench}, modified from the DomainBed~\cite{gulrajani2021in} code suite.

\paragraph{Datasets.}
In our experiment, datasets are chosen to cover as much variety from different OoD research areas as possible. As mentioned earlier, the datasets demonstrated two-dimensional properties shown by their estimated diversity and correlation shift.
The following datasets are dominated by diversity shift: \textbf{PACS}~\cite{li2017deeper}, \textbf{OfficeHome}~\cite{venkateswara2017deep},
\textbf{Terra Incognita}~\cite{beery2018recognition},
and \textbf{Camelyon17-WILDS}~\cite{koh2020wilds}.
On the other hand, our benchmark also include three datasets dominated by correlation shift:
\textbf{Colored MNIST}~\cite{arjovsky2019invariant},
\textbf{NICO}~\cite{he2020towards},
and a modified version of \textbf{CelebA}~\cite{liu2015deep}.
See Appendix~\ref{app:datasets} for more detailed descriptions of the above datasets.

For PACS, OfficeHome, and Terra Incognita, we train multiple models in every run with each treating one of the domains as the test environment and the rest of the domains as the training environments since it is common practice for DG datasets.
The final accuracy is the mean accuracy over all such splits.
For other datasets, the training and test environments are fixed.
A reason is that the leave-one-domain-out evaluation scheme would destroy the designated training/test splits of these datasets.
For more details about dataset statistics and environment splits, see Appendix~\ref{app:datasets}.

\vspace{-1mm}
\paragraph{Algorithms.}
We have selected Empirical Risk Minimization (\textbf{ERM})~\cite{vapnik1998statistical} and several representative algorithms from different OoD research areas for our benchmark:
Group Distributionally Robust Optimization (\textbf{GroupDRO})~\cite{sagawa2019distributionally},
Inter-domain Mixup (\textbf{Mixup})~\cite{xu2020adversarial, yan2020improve},
Meta-Learning for Domain Generalization (\textbf{MLDG})~\cite{li2018learning},
Domain-Adversarial Neural Networks (\textbf{DANN})~\cite{ganin2016domain},
Deep Correlation Alignment (\textbf{CORAL})~\cite{sun2016deep},
Maximum Mean Discrepancy (\textbf{MMD})~\cite{li2018domain},
Invariant Risk Minimization (\textbf{IRM})~\cite{arjovsky2019invariant},
Variance Risk Extrapolation (\textbf{VREx})~\cite{krueger2020outofdistribution},
Adaptive Risk Minimization (\textbf{ARM})~\cite{zhang2020adaptive},
Marginal Transfer Learning (\textbf{MTL})~\cite{blanchard2017domain},
Style-Agnostic Networks (\textbf{SagNet})~\cite{nam2019reducing},
Representation Self Challenging (\textbf{RSC})~\cite{huang2020self},
% Spectral Decoupling \cite[\textbf{SD}][]{pezeshki2020gradient},
Learning Explanations that are Hard to Vary (\textbf{ANDMask})~\cite{parascandolo2021learning},
Out-of-Distribution Generalization with Maximal Invariant Predictor (\textbf{IGA})~\cite{koyama2020out},
and
Entropy Regularization for Domain Generalization (\textbf{ERDG})~\cite{zhao2020domain}.

%%%%%%%%%%%%% 05/23 update %%%%%%%%%%%%%

\begin{table*}[t]
    \centering
    % \rowcolors{blue}
	\begin{threeparttable}
        \begin{tabularx}{\linewidth}{l*{3}{X}rrr}
            \toprule
            % \rowcolor{blue!0}
            \textbf{Algorithm} & \textbf{Colored MNIST} & \textbf{CelebA} & \textbf{NICO} & \textbf{Average} & \textbf{Prev score} & \textbf{Ranking score}\\
            \midrule
            VREx~\cite{krueger2020outofdistribution} &
                56.3 $\pm$ 1.9\tnote{$\uparrow$} &
                87.3 $\pm$ 0.2 &
                71.5 $\pm$ 2.3 &
                71.7 &
                \texttt{-1} &
                \texttt{+1} \\
            GroupDRO~\cite{sagawa2019distributionally} &
                32.5 $\pm$ 0.2\tnote{$\uparrow$} &
                87.5 $\pm$ 1.1 &
                71.0 $\pm$ 0.4 &
                63.7 &
                \texttt{-1} &
                \texttt{+1} \\
            \textbf{ERM}\vphantom{\tnote{$\downarrow$}}~\cite{vapnik1998statistical} & 
                29.9 $\pm$ 0.9 &
                87.2 $\pm$ 0.6 &
                72.1 $\pm$ 1.6 &
                63.1 &
                \texttt{ 0} &
                \texttt{ 0} \\
            IRM~\cite{arjovsky2019invariant} & 
                60.2 $\pm$ 2.4\tnote{$\uparrow$} &
                85.4 $\pm$ 1.2\tnote{$\downarrow$} &
                73.3 $\pm$ 2.1 &
                73.0 &
                \texttt{-1} &
                \texttt{ 0} \\
            MTL\vphantom{\tnote{$\downarrow$}}~\cite{blanchard2017domain} &
                29.3 $\pm$ 0.1 &
                87.0 $\pm$ 0.7 &
                70.6 $\pm$ 0.8 &
                62.3 &
                \texttt{-2} &
                \texttt{ 0} \\
            ERDG~\cite{zhao2020domain} &
                31.6 $\pm$ 1.3\tnote{$\uparrow$} &
                84.5 $\pm$ 0.2\tnote{$\downarrow$} &
                72.7 $\pm$ 1.9 &
                62.9 &
                \texttt{-2} &
                \texttt{ 0} \\
            ARM~\cite{zhang2020adaptive} & 
                34.6 $\pm$ 1.8\tnote{$\uparrow$} &
                86.6 $\pm$ 0.7 &
                67.3 $\pm$ 0.2\tnote{$\downarrow$} &
                62.8 &
                \texttt{-3} &
                \texttt{ 0} \\
            MMD~\cite{li2018domain} &
                50.7 $\pm$ 0.1\tnote{$\uparrow$} &
                86.0 $\pm$ 0.5\tnote{$\downarrow$} &
                68.9 $\pm$ 1.2\tnote{$\downarrow$} &
                68.5 &
                \texttt{+2} &
                \texttt{-1} \\
            RSC~\cite{huang2020self} &
                28.6 $\pm$ 1.5\tnote{$\downarrow$} &
                85.9 $\pm$ 0.2\tnote{$\downarrow$} &
                74.3 $\pm$ 1.9\tnote{$\uparrow$} &
                61.4 &
                \texttt{+2} &
                \texttt{-1} \\
            IGA~\cite{koyama2020out} &
                29.7 $\pm$ 0.5 &
                86.2 $\pm$ 0.7\tnote{$\downarrow$} &
                71.0 $\pm$ 0.1 &
                62.3 &
                \texttt{ 0} &
                \texttt{-1} \\
            CORAL~\cite{sun2016deep} &
                30.0 $\pm$ 0.5 &
                86.3 $\pm$ 0.5\tnote{$\downarrow$} &
                70.8 $\pm$ 1.0 &
                61.5 &
                \texttt{-1} &
                \texttt{-1} \\
            Mixup~\cite{yan2020improve} &
                27.6 $\pm$ 1.8\tnote{$\downarrow$} &
                87.5 $\pm$ 0.5 &
                72.5 $\pm$ 1.5 &
                60.6 &
                \texttt{-2} &
                \texttt{-1} \\
            MLDG~\cite{li2018learning}  &
                32.7 $\pm$ 1.1\tnote{$\uparrow$} &
                85.4 $\pm$ 1.3\tnote{$\downarrow$} &
                66.6 $\pm$ 2.4\tnote{$\downarrow$} &
                56.6 &
                \texttt{-4} &
                \texttt{-1} \\
            SagNet~\cite{nam2019reducing} &
                30.5 $\pm$ 0.7 &
                85.8 $\pm$ 1.4\tnote{$\downarrow$} &
                69.8 $\pm$ 0.7\tnote{$\downarrow$} &
                62.0 &
                \texttt{+1} &
                \texttt{-2} \\
            ANDMask~\cite{parascandolo2021learning} &
                27.2 $\pm$ 1.4\tnote{$\downarrow$} &
                86.2 $\pm$ 0.2\tnote{$\downarrow$} &
                71.2 $\pm$ 0.8 &
                61.5 &
                \texttt{-2} &
                \texttt{-2} \\
            DANN~\cite{ganin2016domain} & 
                24.5 $\pm$ 0.8\tnote{$\downarrow$} &
                86.0 $\pm$ 0.4\tnote{$\downarrow$} &
                69.4 $\pm$ 1.7\tnote{$\downarrow$} &
                59.7 &
                \texttt{-2} &
                \texttt{-3} \\
            \midrule
            % \rowcolor{blue!0}
            \textbf{Average} & 34.5 & 86.4 & 70.8 & 63.7 & -- & -- \\
            \bottomrule
        \end{tabularx}
    \end{threeparttable}
    \caption{Performance of ERM and OoD generalization algorithms on datasets \emph{dominated by correlation shift}. Every symbol $\downarrow$ denotes a score of \texttt{-1}, and every symbol $\uparrow$ denotes a score of \texttt{+1}; otherwise the score is \texttt{0}. Adding up the scores across all datasets produces the ranking score for each algorithm. Prev scores are the scores of corresponding algorithms in \cref{tab:benchmark-div}.}
    \label{tab:benchmark-cor}
    \vspace{-1mm}
\end{table*}

\vspace{-1mm}
\paragraph{Model selection methods.}
As there is still no consensus on what model selection methods should be used in OoD generalization research \cite{gulrajani2021in}, appropriate selection methods are chosen for each dataset in our study.
To be consistent with existing lines of work~\cite{li2017deeper, carlucci2019domain, nam2019reducing, huang2020self, krueger2020outofdistribution}, models trained on PACS, OfficeHome, and Terra Incognita are selected by \emph{training-domain validation}.
As for Camelyon17-WILDS and NICO, \emph{OoD validation} is adopted in respect of \cite{koh2020wilds} and \cite{bai2020decaug}. The two remaining datasets, Colored MNIST and CelebA, use \emph{test-domain validation} which has been seen in  \cite{arjovsky2019invariant, krueger2020outofdistribution, ahuja2020invariant, pezeshki2020gradient}. Another reason for using test-domain validation is that it may be improper to apply training-domain validation to datasets dominated by correlation shift since under the influence of spurious correlations, achieving excessively high accuracy in the training environments often leads to low accuracy in novel test environments.
More detailed explanations of these model selection methods are provided in Appendix~\ref{app:model-selection}.

\vspace{-1mm}
\paragraph{Implementation details.}
Unlike DomainBed, we use a simpler model, ResNet-18 \cite{he2016deep}, for all algorithms and datasets excluding Colored MNIST, as it is the common practice in previous works \cite{huang2020self, nam2019reducing, zhao2020domain, dou2019domain, carlucci2019domain}.
Moreover, we believe smaller models could enlarge the gaps in OoD generalization performance among the algorithms, as larger models are generally more robust to OoD data \cite{hendrycks2020many} and thus the performance is easier to saturate on small datasets.
The ResNet-18 is pretrained on ImageNet and then finetuned on each dataset with only one exception---NICO, which contains photos of animals and vehicles largely overlapped with ImageNet classes. For simplicity, we continue to use a two-layer perceptron following \cite{arjovsky2019invariant, krueger2020outofdistribution, pezeshki2020gradient} for Colored MNIST.
Our experiments further differs from DomainBed in several minor aspects. First, we do not freeze any batch normalization layer in ResNet-18, nor do we use any dropout, to be consistent with most of prior works in DG.
Second, we use a larger portion (90\%) of data from training environments for training and the rest for validation. Third, we use a slightly different data augmentation scheme following \cite{carlucci2019domain}.

Finally, we adopt the following hyperparameter search protocol, the same as in DomainBed:
a 20-times random search is conducted for every pair of dataset and algorithm, and then the search process is repeated for another two random series of hyperparameter combinations, weight initialization, and dataset splits.
Altogether, the three series yield the three best accuracies over which a mean and standard error bar is computed for every dataset-algorithm pair.
See Appendix~\ref{app:hparams-search} for the hyperparameter search space for every individual algorithm.

% \newgeometry{top=3.8cm}
% {\setlength{\topsep}{10cm}
% \input{tables/benchmark_div}
% \input{tables/benchmark_cor}
% }
% \restoregeometry

\vspace{-1mm}
\paragraph{Results.}
The benchmark results are shown in \Cref{tab:benchmark-div} and \Cref{tab:benchmark-cor}.
In addition to mean accuracy and standard error bar, we report a ranking score for each algorithm with respect to ERM. For every dataset-algorithm pair, depending on whether the attained accuracy is lower than, within, or higher than the standard error bar of ERM accuracy on the same dataset, we assign score \texttt{-1}, \texttt{0}, \texttt{+1} to the pair.
Adding up the scores across all datasets produces the ranking score for each algorithm. We underline that the ranking score does not indicate whether an algorithm is definitely better or worse than the other algorithms.
It only reflects a relative degree of robustness against diversity and correlation shift.

\begin{figure*}[t]
    \centering
    \begin{subfigure}[b]{0.30\textwidth}
        \centering
        \includegraphics[width=\linewidth]{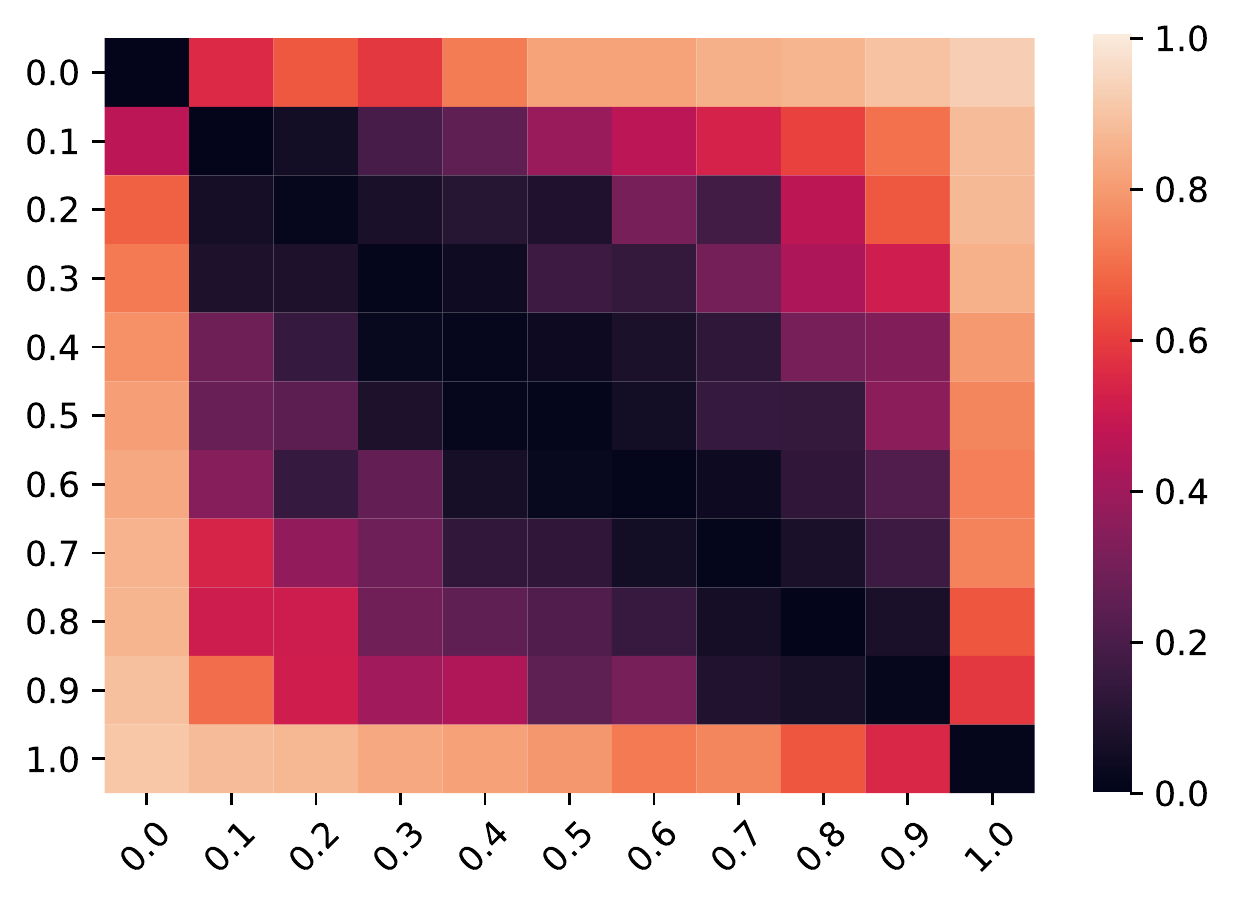}
        \caption{Estimation of correlation shift under varying $\rho_\textnormal{tr}$ and $\rho_\textnormal{te}$. Digits are in red and green only.\newline}
        \label{fig:cor-ablation-1}
    \end{subfigure}
    \hfill
    \begin{subfigure}[b]{0.30\textwidth}
        \centering
        \includegraphics[width=\linewidth]{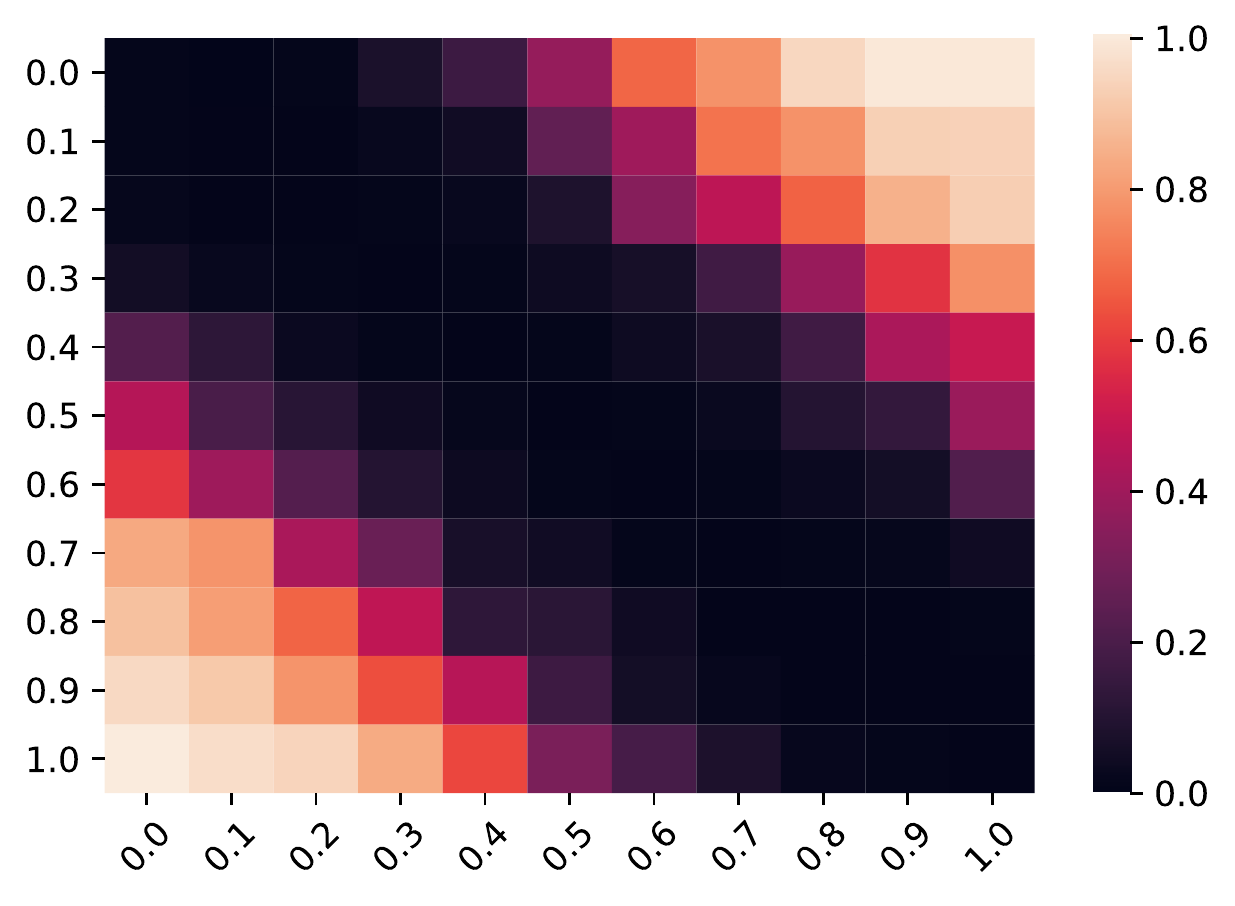}
        \caption{Estimation of diversity shift under varying $\mu_\textnormal{tr}$ and $\mu_\textnormal{te}$ while fixing $\rho_\textnormal{tr} = 0.1$, $\rho_\textnormal{te} = 0.9$ and $\sigma_\textnormal{tr} = \sigma_\textnormal{te} = 0.1$.}
        \label{fig:div-ablation-1}
    \end{subfigure}
    \hfill
    \begin{subfigure}[b]{0.30\textwidth}
        \centering
        \includegraphics[width=\linewidth]{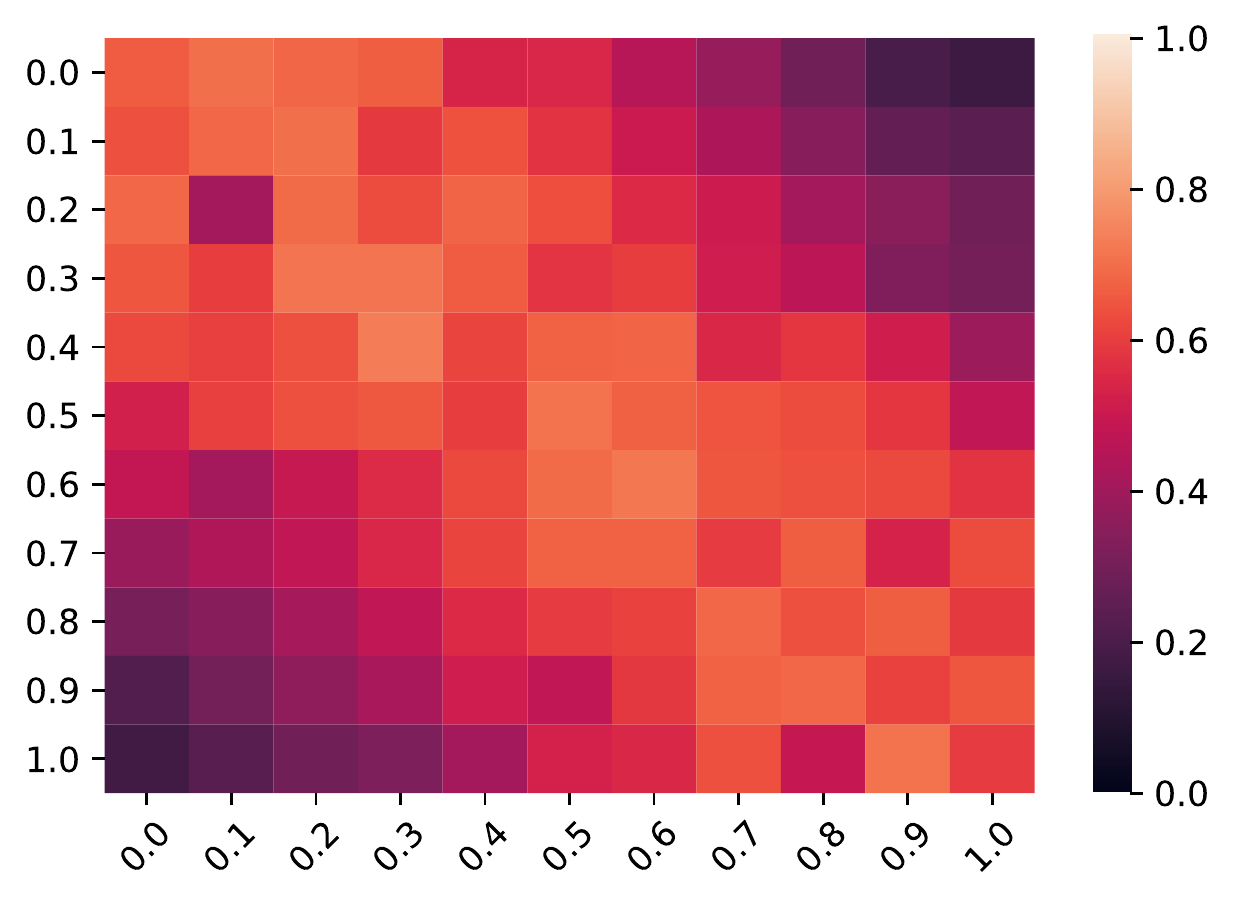}
        \caption{Estimation of correlation shift under varying $\mu_\textnormal{tr}$ and $\mu_\textnormal{te}$ while fixing $\rho_\textnormal{tr} = 0.1$, $\rho_\textnormal{te} = 0.9$ and $\sigma_\textnormal{tr} = \sigma_\textnormal{te} = 0.25$.}
        \label{fig:cor-ablation-2}
    \end{subfigure}
    \caption{Estimation of diversity and correlation shift under varying color distribution in Colored MNIST. Another color, blue, uncorrelated with the labeled classes, is added onto the digits to create diversity shift. The intensity of blue is sampled from a truncated Gaussian distribution for every image. Assuming only one training and one test environment, $\rho_\textnormal{tr}$ and $\rho_\textnormal{te}$ stand for the correlation between red/green and the digits; $\mu_\textnormal{tr}$ and $\mu_\textnormal{te}$ stand for the mean intensities of blue; $\sigma_\textnormal{tr}$ and $\sigma_\textnormal{te}$ stand for the standard deviations.}
    \label{fig:vary_div_cor}
    \vspace{-1mm}
\end{figure*}

From \Cref{tab:benchmark-div} and \Cref{tab:benchmark-cor}, we observe that none of the OoD generalization algorithms achieves consistently better performance than ERM on both OoD directions. For example, on the datasets dominated by diversity shift, the ranking scores of RSC, MMD, and SagNet are higher than ERM, whereas on the datasets dominated by correlation shift, their scores are lower.
Conversely, the algorithms (VREx and GroupDRO) that outperform ERM in \Cref{tab:benchmark-cor} are worse than ERM on datasets of the other kind. This supports our view that \emph{OoD generalization algorithms should be evaluated on datasets embodying both diversity and correlation shift}. Such a comprehensive evaluation is of great importance because real-world data could be tainted by both kinds of distribution shift, \eg, the ImageNet variants in \Cref{fig:estimation}.

In the toy case of Colored MNIST, several algorithms are superior to ERM, however, in the more realistic and complicated cases of CelebA and NICO, none of the algorithms surpasses ERM by a large margin.
Hence, we argue that \emph{contemporary OoD generalization algorithms are by large still vulnerable to spurious correlations}. In particular, IRM that achieves the best accuracy on Colored MNIST among all algorithms, fails to surpass ERM on the other two datasets.
It is in line with the theoretical results discovered by \cite{rosenfeld2021the}: IRM does not improve over ERM unless the test data are sufficiently similar to the training distribution.
Besides, we have also done some experiments on ImageNet-V2, and the result again supports our argument (see \cref{app:imagenetv2-exp}).

Due to inevitable noises and other changing factors in the chosen datasets and training process, whether there is any compelling pattern in the results across the datasets dominated by the same kind of distribution shift is unclear. So, it is important to point out that the magnitude of diversity and correlation shift does not indicate the absolute level of difficulty for generalization. Instead, it represents a likelihood that certain algorithms will perform better than some other algorithms under the same kind of distribution shift.

\subsection{Further Study}
In this section, we conduct further experiments to check the reliability of our estimation method for diversity and correlation shift and compare our method against other existing metrics for measuring the non-i.i.d\onedot property of datasets, demonstrating the robustness of our estimation method and the significance of diversity and correlation shift.

\vspace{-1mm}
\paragraph{Sanity check and numerical stability.}
To validate the robustness of our estimation method, we check whether it can produce stable results that faithfully reflect the expected trend as we manipulate the color distribution of Colored MNIST. For simplicity, only one training environment is assumed.
To start with, we manipulate the correlation coefficients $\rho_\textnormal{tr}$ and $\rho_\textnormal{te}$ between digits and colors in constructing the dataset. From \Cref{fig:cor-ablation-1}, we can observe that when $\rho_\textnormal{tr}$ and $\rho_\textnormal{te}$ have similar values, the estimated correlation shift is negligible. It aligns well with our definition of correlation shift that measures the distribution difference of features present in both environments.
As for examining on the estimation of diversity shift, another color, blue, is introduced in the dataset.
The intensity (between $0$ and $1$) of blue added onto each digit is sampled from truncated Gaussian distributions with means $\mu_\textnormal{tr}$, $\mu_\textnormal{te}$ and standard deviations $\sigma_\textnormal{tr}$, $\sigma_\textnormal{te}$ for training and test environment respectively. Meanwhile, the intensity of red and green is subtracted by the same amount.
From \Cref{fig:div-ablation-1}, we observe that as the difference in color varies between red/green and blue, the estimate of diversity shift varies accordingly (at the corners).
Lastly, we investigate the behavior in the estimation of correlation shift while keeping the correlation coefficients fixed and manipulating $\mu_\textnormal{tr}$ and $\mu_\textnormal{te}$ that controls diversity shift. \Cref{fig:cor-ablation-2} shows a trade-off between diversity and correlation shift, as implied by their definitions. Experiments in every grid cell are conducted only once, so the heatmaps also reflect the variance in our estimation, which can be compensated by averaging over multiple runs.

\vspace{-1mm}
\paragraph{Comparison with other measures of distribution shift.}
We also compare OoD-Bench with other measures of distribution shift. The results on the variants of Colored MNIST are shown in \Cref{tab:failure-cases}. We empirically show that general metrics for measuring the discrepancy between distributions, such as EMD~\cite{rubner1998metric} and MMD~\cite{gretton2006kernel}, are not very informative.
Specifically, EMD and MMD are insensitive to the correlation shift in the datasets, while EMD is also insensitive to the diversity shift.
Although NI \cite{he2020towards} can produce comparative results on correlation shift, it is still unidimensional like EMD and MMD, not discerning the two kinds of distribution shift present in the datasets.
In comparison, our method provides more stable and interpretable results.
As $\rho_\textnormal{tr}$ an $\rho_\textnormal{te}$ gradually become close, the estimated correlation shift reduces to zero.
On the other hand, the estimated diversity shift remains constant zero until the last scenario where our method again produces the expected answer.

\section{Related Work}
\begin{table*}[t]
    \small
    \centering
    % \rowcolors{1}{blue!0}{blue!4}
    \begin{tabular}{l c c c c c c}
        \toprule
        % \rowcolor{blue!0}
        \textbf{$\rho_\textnormal{te}$} &\makecell{\textbf{Dominant}\\\textbf{shift}} & \textbf{EMD}\hphantom{\Checkmark} & \textbf{MMD}\hphantom{\Checkmark} & \textbf{NI}\hphantom{\Checkmark} & \makecell{\textbf{Div\onedot shift}\\{(ours)}} & \makecell{\textbf{Cor\onedot shift}\\{(ours)}} \\
        \midrule
        %\hline
        0.9      & Cor\onedot shift & 0.08 $\pm$ 0.01 \XSolidBrush & 0.01 $\pm$ 0.00 \XSolidBrush & 1.40 $\pm$ 0.06 \Checkmark & 0.00 $\pm$ 0.00 & 0.67 $\pm$ 0.04 \\
        0.7      & Cor\onedot shift & 0.07 $\pm$ 0.00 \XSolidBrush & 0.01 $\pm$ 0.00 \XSolidBrush & 1.05 $\pm$ 0.03 \Checkmark & 0.00 $\pm$ 0.00 & 0.48 $\pm$ 0.06 \\
        0.5      & Cor\onedot shift & 0.07 $\pm$ 0.00 \XSolidBrush & 0.00 $\pm$ 0.00 \XSolidBrush & 0.72 $\pm$ 0.04 \Checkmark & 0.00 $\pm$ 0.00 & 0.34 $\pm$ 0.06 \\
        0.3      & Cor\onedot shift & 0.06 $\pm$ 0.00 \XSolidBrush & 0.00 $\pm$ 0.00 \XSolidBrush & 0.57 $\pm$ 0.04 \Checkmark & 0.00 $\pm$ 0.00 & 0.18 $\pm$ 0.05 \\
        0.1      & None       & 0.06 $\pm$ 0.00 \XSolidBrush & 0.00 $\pm$ 0.00 \Checkmark & 0.39 $\pm$ 0.02 \XSolidBrush & 0.00 $\pm$ 0.00 & 0.00 $\pm$ 0.00 \\
        0.1$^\dagger$ & Div\onedot shift & 0.29 $\pm$ 0.01 \XSolidBrush & 1.00 $\pm$ 0.00 \Checkmark & 10.76 $\pm$ 0.43 \XSolidBrush\hphantom{0} & 0.93 $\pm$ 0.01 & 0.00 $\pm$ 0.00 \\
        \bottomrule
    \end{tabular}
    \caption{Existing metrics on measuring the distribution shift in Colored MNIST with only one training environment where $\rho_\textnormal{tr} = \textnormal{0.1}$. All environments contain only red and green digits except the last. $^\dagger$Blue is added with $\mu_\textnormal{tr} = \textnormal{0}$, $\mu_\textnormal{te} = \textnormal{1}$ and $\sigma_\textnormal{tr} = \sigma_\textnormal{te} = \textnormal{0.1}$. Results are averaged over 5 runs.}
    \label{tab:failure-cases}
    \vspace{-1mm}
\end{table*}

\paragraph{Quantification on distribution shifts.}
Non-i.i.d\onedot Index (NI) \cite{he2020towards} quantifies the degree of distribution shift between training and test set with a single formula.
There are also a great number of general distance measures for distributions: Kullback-Leibler (KL) divergence, EMD~\cite{rubner1998metric}, MMD~\cite{gretton2006kernel}, and $\Acal$-distance~\cite{ben2007analysis}, etc.
However, they all suffer from the same limitation as NI, not being able to discern different kinds of distribution shifts.
To the best of our knowledge, we are among the first to formally identify the two-dimensional distribution shift and provide quantitative results on various OoD datasets.
Notably, a concurrent work~\cite{wiles2022a} studies three kinds of distribution shift, namely \emph{spurious correlation}, \emph{low-data drift}, and \emph{unseen data shift}, which are very similar to correlation and diversity shift.
Their findings are mostly in line with ours, but they do not provide any quantification formula or estimation method for the shifts.

\vspace{-1.5mm}
\paragraph{OoD generalization.}
Without access to test distribution examples, OoD generalization always requires additional assumptions or domain information.
In the setting of DG~\cite{blanchard2011generalizing,torralba2011unbiased,muandet2013domain}, it is often assumed that multiple training datasets sampled from similar but distinct domains are available.
Hence, most DG algorithms aim at learning a domain-invariant data representation across training domains.
These algorithms take various approaches include domain adversarial learning~\cite{ganin2016domain, li2018domain, akuzawa2019adversarial, albuquerque2019adversarial, albuquerque2019generalizing, zhao2020domain, yan2020improve, xu2020adversarial},
meta-learning~\cite{li2018learning, balaji2018metareg, dou2019domain, li2019feature, zhang2020adaptive},
image-level and feature-level domain mixup~\cite{xu2020adversarial, mancini2020towards},
adversarial data augmentation~\cite{shankar2018generalizing},
domain translation/randomization~\cite{nguyen2021domain, robey2021model, zhou2020learning},
feature alignment~\cite{sun2016deep, peng2019moment},
gradient alignment~\cite{koyama2020out, rame2021fishr, shi2021gradient},
gradient orthogonalization~\cite{bai2020decaug},
invariant risk minimization~\cite{arjovsky2019invariant, ahuja2020invariant, krueger2020outofdistribution},
self-supervised learning~\cite{wang2020learning, zhou2020deep},
prototypical learning~\cite{dubey2021adaptive},
and kernel methods~\cite{blanchard2017domain, muandet2013domain, li2018domain2, ghifary2016scatter}.
There are also DG algorithms that do not assume multiple training domains.
Many of them instead assume that variations in the style/texture of images is the main cause of distribution shift.
These algorithms mostly utilize AdaIN~\cite{huang2017arbitrary} or similar operations to perform style perturbations so that the learned classifier would be invariant to various styles across domains~\cite{nam2019reducing, li2021progressive, somavarapu2020frustratingly, wang2021learning, zhou2020domain, jeon2021feature}.
Other approaches include \cite{carlucci2019domain} which designs a self-supervision objective enforcing models to focus on global image structures such as shapes of objects, and \cite{wang2019learning} which introduces an explicit adversarial learning objective so that the learned model would be invariant to local patterns.
More general single-source DG algorithms (that do not assume the style/texture bias) and other OoD generalization algorithms include distributionally robust optimization~\cite{sagawa2019distributionally}, self-challenging~\cite{huang2020self}, spectral decoupling~\cite{pezeshki2020gradient}, feature augmentation~\cite{li2021simple}, adversarial data augmentation~\cite{volpi2018generalizing, qiao2020learning}, gradient alignment~\cite{parascandolo2021learning}, sample reweighting~\cite{he2020towards, zhang2021deep}, test-time training~\cite{sun2020test}, removing bias with bias~\cite{bahng2019rebias}, contrastive learning~\cite{kim2021selfreg}, causal discovery~\cite{mouli2021asymmetry}, and variational bayes that leverages causal structures of data~\cite{liu2021learning, sun2021recovering}.
For a more comprehensive summary of existing OoD generalization and DG algorithms, we refer readers to these survey papers~\cite{zhou2021domain, shen2021towards, wang2021generalizing}.

% Currently, there are two main streams of research on OoD generalization algorithms in deep learning: Domain Generalization (DG) and risk regularization methods.
% DG has been an active research area for quite some time, dating back to the work of \cite{torralba2011unbiased}.
% It has since then sprouted into a number of branches such as invariant representation learning \cite{muandet2013domain, li2018domain, li2018domain2, akuzawa2019adversarial, albuquerque2019adversarial}, meta-learning for DG \cite{li2018learning, balaji2018metareg, dou2019domain, li2019feature}, and data augmentation for DG \cite{volpi2018generalizing, zhang2019unseen, wang2020heterogeneous, wang2020learning, zhou2020deep}.
% The seminal work of risk regularization methods, IRM~\cite{arjovsky2019invariant}, aims to find an invariant representation of data from different training environments by adding an invariant risk regularization. It has thereafter inspired several other notable algorithms, including IRM-Games~\cite{ahuja2020invariant}, VREx~\cite{krueger2020outofdistribution} and IGA~\cite{koyama2020out}.
% {\color{blue} Recently, there are also some causality-inspired methods being proposed, including \cite{sun2021recovering, liu2021learning, mouli2021asymmetry}.}

\vspace{-1.5mm}
\paragraph{DomainBed.}
The living benchmark is created by \cite{gulrajani2021in} to facilitate disciplined and reproducible DG research.
After conducting a large-scale hyperparameter search, the performances of fourteen algorithms on seven datasets are reported.
The authors then arrive at the conclusion that ERM beats most of DG algorithms under the same fair setting.
Our work differs from DomainBed mainly in three aspects.
First, we not only provide a benchmark for algorithms but also for datasets, helping us gain a deeper understanding of the distribution shift in the data.
Second, we compare different algorithms in a more informative manner in light of diversity and correlation shift, recovering the fact that some algorithms are indeed better than ERM in appropriate scenarios.
Third, we experiment with several new algorithms and new datasets, especially those dominated by correlation shift.

\section{Conclusion}

In this paper, we have identified diversity shift and correlation shift as two of the main forms of distribution shift in OoD datasets. The two-dimensional characterization positions disconnected datasets into a unified picture and have shed light on the nature of unknown distribution shift in some real-world data. In addition, we have demonstrated some of the strengths and weaknesses of existing OoD generalization algorithms.
The results suggest that future algorithms should be more comprehensively evaluated on two types of datasets, one dominated by diversity shift and the other dominated by correlation shift.
Lastly, we leave an open problem regarding whether there exists an algorithm that can perform well under both diversity and correlation shift. If not then our method can be used for choosing the appropriate algorithms.

%%%%%%%%% REFERENCES
{\small
\bibliographystyle{ieee_fullname}
\bibliography{ood}

\begin{thebibliography}{100}\itemsep=-1pt

\bibitem{ahuja2020invariant}
Kartik Ahuja, Karthikeyan Shanmugam, Kush Varshney, and Amit Dhurandhar.
\newblock Invariant risk minimization games.
\newblock In {\em ICML}, 2020.

\bibitem{ahuja2021empirical}
Kartik Ahuja, Jun Wang, Amit Dhurandhar, Karthikeyan Shanmugam, and Kush~R.
  Varshney.
\newblock Empirical or invariant risk minimization? a sample complexity
  perspective.
\newblock In {\em ICLR}, 2021.

\bibitem{akuzawa2019adversarial}
Kei Akuzawa, Yusuke Iwasawa, and Yutaka Matsuo.
\newblock Adversarial invariant feature learning with accuracy constraint for
  domain generalization.
\newblock In {\em ECML-PKDD}, 2019.

\bibitem{albadawy2018deep}
Ehab~A AlBadawy, Ashirbani Saha, and Maciej~A Mazurowski.
\newblock Deep learning for segmentation of brain tumors: Impact of
  cross-institutional training and testing.
\newblock {\em Medical physics}, 2018.

\bibitem{albuquerque2019generalizing}
Isabela Albuquerque, Jo{\~a}o Monteiro, Mohammad Darvishi, Tiago~H Falk, and
  Ioannis Mitliagkas.
\newblock Generalizing to unseen domains via distribution matching.
\newblock {\em arXiv:1911.00804}, 2019.

\bibitem{albuquerque2019adversarial}
Isabela Albuquerque, Jo{\~a}o Monteiro, Tiago~H Falk, and Ioannis Mitliagkas.
\newblock Adversarial target-invariant representation learning for domain
  generalization.
\newblock {\em arXiv:1911.00804}, 2019.

\bibitem{alcorn2019strike}
Michael~A Alcorn, Qi Li, Zhitao Gong, Chengfei Wang, Long Mai, Wei-Shinn Ku,
  and Anh Nguyen.
\newblock Strike (with) a pose: Neural networks are easily fooled by strange
  poses of familiar objects.
\newblock In {\em CVPR}, 2019.

\bibitem{arjovsky2020out}
Martin Arjovsky.
\newblock {\em Out of distribution generalization in machine learning}.
\newblock PhD thesis, New York University, 2020.

\bibitem{arjovsky2019invariant}
Martin Arjovsky, L{\'e}on Bottou, Ishaan Gulrajani, and David Lopez-Paz.
\newblock Invariant risk minimization.
\newblock {\em arXiv:1907.02893}, 2019.

\bibitem{azizzadenesheli2018regularized}
Kamyar Azizzadenesheli, Anqi Liu, Fanny Yang, and Animashree Anandkumar.
\newblock Regularized learning for domain adaptation under label shifts.
\newblock In {\em ICLR}, 2019.

\bibitem{bahng2019rebias}
Hyojin Bahng, Sanghyuk Chun, Sangdoo Yun, Jaegul Choo, and Seong~Joon Oh.
\newblock Learning de-biased representations with biased representations.
\newblock In {\em ICML}, 2020.

\bibitem{bai2020decaug}
Haoyue Bai, Rui Sun, Lanqing Hong, Fengwei Zhou, Nanyang Ye, Han-Jia Ye,
  S-H~Gary Chan, and Zhenguo Li.
\newblock Decaug: Out-of-distribution generalization via decomposed feature
  representation and semantic augmentation.
\newblock {\em arXiv:2012.09382}, 2020.

\bibitem{balaji2018metareg}
Yogesh Balaji, Swami Sankaranarayanan, and Rama Chellappa.
\newblock Metareg: Towards domain generalization using meta-regularization.
\newblock {\em NeurIPS}, 2018.

\bibitem{beery2018recognition}
Sara Beery, Grant~Van Horn, and Pietro Perona.
\newblock Recognition in terra incognita.
\newblock In {\em ECCV}, 2018.

\bibitem{ben2007analysis}
Shai Ben-David, John Blitzer, Koby Crammer, Fernando Pereira, et~al.
\newblock Analysis of representations for domain adaptation.
\newblock {\em NeurIPS}, 2007.

\bibitem{blanchard2017domain}
Gilles Blanchard, Aniket~Anand Deshmukh, Urun Dogan, Gyemin Lee, and Clayton
  Scott.
\newblock Domain generalization by marginal transfer learning.
\newblock {\em arXiv:1711.07910}, 2017.

\bibitem{blanchard2011generalizing}
Gilles Blanchard, Gyemin Lee, and Clayton Scott.
\newblock Generalizing from several related classification tasks to a new
  unlabeled sample.
\newblock In {\em NeurIPS}, 2011.

\bibitem{bandi2019from}
Péter et~al. Bándi.
\newblock From detection of individual metastases to classification of lymph
  node status at the patient level: The {CAMELYON}17 challenge.
\newblock {\em TMI}, 2019.

\bibitem{carlucci2019domain}
Fabio~Maria Carlucci, Antonio D'Innocente, Silvia Bucci, Barbara Caputo, and
  Tatiana Tommasi.
\newblock Domain generalization by solving jigsaw puzzles.
\newblock In {\em CVPR}, 2019.

\bibitem{castro2020causality}
Daniel~C Castro, Ian Walker, and Ben Glocker.
\newblock Causality matters in medical imaging.
\newblock {\em Nature Communications}, 2020.

\bibitem{chuang2021fair}
Ching-Yao Chuang and Youssef Mroueh.
\newblock Fair mixup: Fairness via interpolation.
\newblock In {\em ICLR}, 2021.

\bibitem{dai2018dark}
Dengxin Dai and Luc Van~Gool.
\newblock Dark model adaptation: Semantic image segmentation from daytime to
  nighttime.
\newblock In {\em ITSC}, 2018.

\bibitem{deng2009imagenet}
Jia Deng, Wei Dong, Richard Socher, Li-Jia Li, Kai Li, and Li Fei-Fei.
\newblock Imagenet: A large-scale hierarchical image database.
\newblock In {\em CVPR}, 2009.

\bibitem{devlin2018bert}
Jacob Devlin, Ming-Wei Chang, Kenton Lee, and Kristina Toutanova.
\newblock Bert: Pre-training of deep bidirectional transformers for language
  understanding.
\newblock {\em arXiv:1810.04805}, 2018.

\bibitem{dou2019domain}
Qi Dou, Daniel~C. Castro, Konstantinos Kamnitsas, and Ben Glocker.
\newblock Domain generalization via model-agnostic learning of semantic
  features.
\newblock In {\em NeurIPS}, 2019.

\bibitem{dubey2021adaptive}
Abhimanyu Dubey, Vignesh Ramanathan, Alex Pentland, and Dhruv Mahajan.
\newblock Adaptive methods for real-world domain generalization.
\newblock In {\em CVPR}, 2021.

\bibitem{ganin2016domain}
Yaroslav Ganin, Evgeniya Ustinova, Hana Ajakan, Pascal Germain, Hugo
  Larochelle, Fran{\c{c}}ois Laviolette, Mario Marchand, and Victor Lempitsky.
\newblock Domain-adversarial training of neural networks.
\newblock {\em JMLR}, 2016.

\bibitem{geirhos2020shortcut}
Robert Geirhos, Jörn-Henrik Jacobsen, Claudio Michaelis, Richard Zemel,
  Wieland Brendel, Matthias Bethge, and Felix~A. Wichmann.
\newblock Shortcut learning in deep neural networks.
\newblock {\em Nature Machine Intelligence}, 2020.

\bibitem{ghifary2016scatter}
Muhammad Ghifary, David Balduzzi, W~Bastiaan Kleijn, and Mengjie Zhang.
\newblock Scatter component analysis: A unified framework for domain adaptation
  and domain generalization.
\newblock {\em IEEE transactions on pattern analysis and machine intelligence},
  39(7):1414--1430, 2016.

\bibitem{gretton2006kernel}
Arthur Gretton, Karsten Borgwardt, Malte Rasch, Bernhard Sch{\"o}lkopf, and
  Alex Smola.
\newblock A kernel method for the two-sample-problem.
\newblock In {\em NeurIPS}, 2006.

\bibitem{gulrajani2021in}
Ishaan Gulrajani and David Lopez-Paz.
\newblock In search of lost domain generalization.
\newblock In {\em ICLR}, 2021.

\bibitem{he2016deep}
Kaiming He, Xiangyu Zhang, Shaoqing Ren, and Jian Sun.
\newblock Deep residual learning for image recognition.
\newblock In {\em CVPR}, 2016.

\bibitem{he2020towards}
Yue He, Zheyan Shen, and Peng Cui.
\newblock Towards non-iid image classification: A dataset and baselines.
\newblock {\em Pattern Recognition}, 2020.

\bibitem{hendrycks2020many}
Dan Hendrycks, Steven Basart, Norman Mu, Saurav Kadavath, Frank Wang, Evan
  Dorundo, Rahul Desai, Tyler Zhu, Samyak Parajuli, Mike Guo, Dawn Song, Jacob
  Steinhardt, and Justin Gilmer.
\newblock The many faces of robustness: A critical analysis of
  out-of-distribution generalization.
\newblock {\em arXiv:2006.16241}, 2020.

\bibitem{hendrycks2019natural}
Dan Hendrycks, Kevin Zhao, Steven Basart, Jacob Steinhardt, and Dawn Song.
\newblock Natural adversarial examples.
\newblock {\em arXiv:1907.07174}, 2019.

\bibitem{huang2017arbitrary}
Xun Huang and Serge Belongie.
\newblock Arbitrary style transfer in real-time with adaptive instance
  normalization.
\newblock In {\em ICCV}, 2017.

\bibitem{huang2020survey}
Xiaowei Huang, Daniel Kroening, Wenjie Ruan, James Sharp, Youcheng Sun, Emese
  Thamo, Min Wu, and Xinping Yi.
\newblock A survey of safety and trustworthiness of deep neural networks:
  Verification, testing, adversarial attack and defence, and interpretability.
\newblock {\em Computer Science Review}, 2020.

\bibitem{huang2020self}
Zeyi Huang, Haohan Wang, Eric~P Xing, and Dong Huang.
\newblock Self-challenging improves cross-domain generalization.
\newblock {\em arXiv:2007.02454}, 2020.

\bibitem{jacot2018neural}
Arthur Jacot, Franck Gabriel, and Cl{\'e}ment Hongler.
\newblock Neural tangent kernel: Convergence and generalization in neural
  networks.
\newblock {\em arXiv:1806.07572}, 2018.

\bibitem{jeon2021feature}
Seogkyu Jeon, Kibeom Hong, Pilhyeon Lee, Jewook Lee, and Hyeran Byun.
\newblock Feature stylization and domain-aware contrastive learning for domain
  generalization.
\newblock In {\em Proceedings of the 29th ACM International Conference on
  Multimedia}, 2021.

\bibitem{kim2021selfreg}
Daehee Kim, Youngjun Yoo, Seunghyun Park, Jinkyu Kim, and Jaekoo Lee.
\newblock Selfreg: Self-supervised contrastive regularization for domain
  generalization.
\newblock In {\em ICCV}, 2021.

\bibitem{koh2020wilds}
Pang~Wei Koh, Shiori Sagawa, Henrik Marklund, Sang~Michael Xie, Marvin Zhang,
  Akshay Balsubramani, Weihua Hu, Michihiro Yasunaga, Richard~Lanas Phillips,
  Sara Beery, et~al.
\newblock Wilds: A benchmark of in-the-wild distribution shifts.
\newblock {\em arXiv:2012.07421}, 2020.

\bibitem{koyama2020out}
Masanori Koyama and Shoichiro Yamaguchi.
\newblock Out-of-distribution generalization with maximal invariant predictor.
\newblock {\em arXiv:2008.01883}, 2020.

\bibitem{krueger2020outofdistribution}
David Krueger, Ethan Caballero, Joern-Henrik Jacobsen, Amy Zhang, Jonathan
  Binas, Remi~Le Priol, and Aaron Courville.
\newblock Out-of-distribution generalization via risk extrapolation (rex).
\newblock {\em arXiv:2003.00688}, 2020.

\bibitem{lecun1998gradient}
Y. Lecun, L. Bottou, Y. Bengio, and P. Haffner.
\newblock Gradient-based learning applied to document recognition.
\newblock {\em Proceedings of the IEEE}, 1998.

\bibitem{li2017deeper}
D. Li, Y. Yang, Y. Song, and T.~M. Hospedales.
\newblock Deeper, broader and artier domain generalization.
\newblock In {\em ICCV}, 2017.

\bibitem{li2018learning}
Da Li, Yongxin Yang, Yi-Zhe Song, and Timothy Hospedales.
\newblock Learning to generalize: Meta-learning for domain generalization.
\newblock In {\em AAAI}, 2018.

\bibitem{li2018domain}
Haoliang Li, Sinno Jialin~Pan, Shiqi Wang, and Alex~C Kot.
\newblock Domain generalization with adversarial feature learning.
\newblock In {\em CVPR}, 2018.

\bibitem{li2021progressive}
Lei Li, Ke Gao, Juan Cao, Ziyao Huang, Yepeng Weng, Xiaoyue Mi, Zhengze Yu,
  Xiaoya Li, and Boyang Xia.
\newblock Progressive domain expansion network for single domain
  generalization.
\newblock In {\em CVPR}, 2021.

\bibitem{li2021simple}
Pan Li, Da Li, Wei Li, Shaogang Gong, Yanwei Fu, and Timothy~M Hospedales.
\newblock A simple feature augmentation for domain generalization.
\newblock In {\em ICCV}, 2021.

\bibitem{li2018domain2}
Ya Li, Mingming Gong, Xinmei Tian, Tongliang Liu, and Dacheng Tao.
\newblock Domain generalization via conditional invariant representations.
\newblock In {\em AAAI}, 2018.

\bibitem{li2019feature}
Yiying Li, Yongxin Yang, Wei Zhou, and Timothy Hospedales.
\newblock Feature-critic networks for heterogeneous domain generalization.
\newblock In {\em ICML}, 2019.

\bibitem{liu2021learning}
Chang Liu, Xinwei Sun, Jindong Wang, Haoyue Tang, Tao Li, Tao Qin, Wei Chen,
  and Tie-Yan Liu.
\newblock Learning causal semantic representation for out-of-distribution
  prediction.
\newblock In {\em NeurIPS}, 2021.

\bibitem{liu2015deep}
Ziwei Liu, Ping Luo, Xiaogang Wang, and Xiaoou Tang.
\newblock Deep learning face attributes in the wild.
\newblock In {\em ICCV}, 2015.

\bibitem{mancini2020towards}
Massimiliano Mancini, Zeynep Akata, Elisa Ricci, and Barbara Caputo.
\newblock Towards recognizing unseen categories in unseen domains.
\newblock In {\em ECCV}, 2020.

\bibitem{michaelis2019benchmarking}
Claudio Michaelis, Benjamin Mitzkus, Robert Geirhos, Evgenia Rusak, Oliver
  Bringmann, Alexander~S Ecker, Matthias Bethge, and Wieland Brendel.
\newblock Benchmarking robustness in object detection: Autonomous driving when
  winter is coming.
\newblock {\em arXiv:1907.07484}, 2019.

\bibitem{mitrovic2021representation}
Jovana Mitrovic, Brian McWilliams, Jacob~C Walker, Lars~Holger Buesing, and
  Charles Blundell.
\newblock Representation learning via invariant causal mechanisms.
\newblock In {\em ICLR}, 2021.

\bibitem{mouli2021asymmetry}
S~Chandra Mouli and Bruno Ribeiro.
\newblock Asymmetry learning for counterfactually-invariant classification in
  ood tasks.
\newblock In {\em ICLR}, 2021.

\bibitem{muandet2013domain}
Krikamol Muandet, David Balduzzi, and Bernhard Sch{\"o}lkopf.
\newblock Domain generalization via invariant feature representation.
\newblock In {\em ICML}, 2013.

\bibitem{nam2019reducing}
Hyeonseob Nam, HyunJae Lee, Jongchan Park, Wonjun Yoon, and Donggeun Yoo.
\newblock Reducing domain gap via style-agnostic networks.
\newblock {\em arXiv:1910.11645}, 2019.

\bibitem{newman1999monte}
M Newman and G Barkema.
\newblock {\em Monte carlo methods in statistical physics chapter 1-4}.
\newblock Oxford University Press, 1999.

\bibitem{nguyen2021domain}
A.~Tuan Nguyen, Toan Tran, Yarin Gal, and Atilim~Gunes Baydin.
\newblock Domain invariant representation learning with domain density
  transformations.
\newblock In {\em NeurIPS}, 2021.

\bibitem{ouyang2021causality}
Cheng Ouyang, Chen Chen, Surui Li, Zeju Li, Chen Qin, Wenjia Bai, and Daniel
  Rueckert.
\newblock Causality-inspired single-source domain generalization for medical
  image segmentation.
\newblock {\em arXiv:2111.12525}, 2021.

\bibitem{parascandolo2021learning}
Giambattista Parascandolo, Alexander Neitz, ANTONIO ORVIETO, Luigi Gresele, and
  Bernhard Sch{\"o}lkopf.
\newblock Learning explanations that are hard to vary.
\newblock In {\em ICLR}, 2021.

\bibitem{parzen1962onestimation}
Emanuel Parzen.
\newblock On estimation of a probability density function and mode.
\newblock {\em The Annals of Mathematical Statistics}, 1962.

\bibitem{pearl2000causality}
Judea Pearl.
\newblock {\em Causality: Models, Reasoning and Inference}.
\newblock Cambridge University Press, 2000.

\bibitem{pearl2010causal}
Judea Pearl.
\newblock Causal inference.
\newblock {\em Causality: Objectives and Assessment}, pages 39--58, 2010.

\bibitem{peng2019moment}
Xingchao Peng, Qinxun Bai, Xide Xia, Zijun Huang, Kate Saenko, and Bo Wang.
\newblock Moment matching for multi-source domain adaptation.
\newblock In {\em ICCV}, 2019.

\bibitem{peters2017elements}
Jonas Peters, Dominik Janzing, and Bernhard Sch{\"o}lkopf.
\newblock {\em Elements of causal inference: foundations and learning
  algorithms}.
\newblock The MIT Press, 2017.

\bibitem{pezeshki2020gradient}
Mohammad Pezeshki, S{\'e}kou-Oumar Kaba, Yoshua Bengio, Aaron Courville, Doina
  Precup, and Guillaume Lajoie.
\newblock Gradient starvation: A learning proclivity in neural networks.
\newblock {\em arXiv:2011.09468}, 2020.

\bibitem{qiao2020learning}
Fengchun Qiao, Long Zhao, and Xi Peng.
\newblock Learning to learn single domain generalization.
\newblock In {\em CVPR}, 2020.

\bibitem{quadrianto2019discovering}
Novi Quadrianto, Viktoriia Sharmanska, and Oliver Thomas.
\newblock Discovering fair representations in the data domain.
\newblock In {\em CVPR}, 2019.

\bibitem{rame2021fishr}
Alexandre Rame, Corentin Dancette, and Matthieu Cord.
\newblock Fishr: Invariant gradient variances for out-of-distribution
  generalization.
\newblock {\em arXiv:2109.02934}, 2021.

\bibitem{recht2019imagenet}
Benjamin Recht, Rebecca Roelofs, Ludwig Schmidt, and Vaishaal Shankar.
\newblock Do imagenet classifiers generalize to imagenet?
\newblock In {\em ICML}, 2019.

\bibitem{robey2021model}
Alexander Robey, George Pappas, and Hamed Hassani.
\newblock Model-based domain generalization.
\newblock In {\em NeurIPS}, 2021.

\bibitem{rosenblatt1956remarks}
Murray Rosenblatt.
\newblock Remarks on some nonparametric estimates of a density function.
\newblock {\em The Annals of Mathematical Statistics}, 1956.

\bibitem{rosenfeld2021the}
Elan Rosenfeld, Pradeep~Kumar Ravikumar, and Andrej Risteski.
\newblock The risks of invariant risk minimization.
\newblock In {\em ICLR}, 2021.

\bibitem{rubner1998metric}
Yossi Rubner, Carlo Tomasi, and Leonidas~J Guibas.
\newblock A metric for distributions with applications to image databases.
\newblock In {\em ICCV}, 1998.

\bibitem{sagawa2019distributionally}
Shiori Sagawa, Pang~Wei Koh, Tatsunori~B. Hashimoto, and Percy Liang.
\newblock Distributionally robust neural networks.
\newblock In {\em ICLR}, 2020.

\bibitem{sato2020security}
Takami Sato, Junjie Shen, Ningfei Wang, Yunhan~Jack Jia, Xue Lin, and Qi~Alfred
  Chen.
\newblock Security of deep learning based lane keeping system under
  physical-world adversarial attack.
\newblock {\em arXiv:2003.01782}, 2020.

\bibitem{shankar2018generalizing}
Shiv Shankar, Vihari Piratla, Soumen Chakrabarti, Siddhartha Chaudhuri, Preethi
  Jyothi, and Sunita Sarawagi.
\newblock Generalizing across domains via cross-gradient training.
\newblock {\em arXiv:1804.10745}, 2018.

\bibitem{shen2021towards}
Zheyan Shen, Jiashuo Liu, Yue He, Xingxuan Zhang, Renzhe Xu, Han Yu, and Peng
  Cui.
\newblock Towards out-of-distribution generalization: A survey.
\newblock {\em arXiv:2108.13624}, 2021.

\bibitem{shi2021gradient}
Yuge Shi, Jeffrey Seely, Philip~HS Torr, N Siddharth, Awni Hannun, Nicolas
  Usunier, and Gabriel Synnaeve.
\newblock Gradient matching for domain generalization.
\newblock {\em arXiv:2104.09937}, 2021.

\bibitem{somavarapu2020frustratingly}
Nathan Somavarapu, Chih-Yao Ma, and Zsolt Kira.
\newblock Frustratingly simple domain generalization via image stylization.
\newblock {\em arXiv:2006.11207}, 2020.

\bibitem{sun2016deep}
Baochen Sun and Kate Saenko.
\newblock Deep coral: Correlation alignment for deep domain adaptation.
\newblock In {\em ECCV}, 2016.

\bibitem{sun2021recovering}
Xinwei Sun, Botong Wu, Xiangyu Zheng, Chang Liu, Wei Chen, Tao Qin, and Tie-Yan
  Liu.
\newblock Recovering latent causal factor for generalization to distributional
  shifts.
\newblock In {\em NeurIPS}, 2021.

\bibitem{sun2020test}
Yu Sun, Xiaolong Wang, Zhuang Liu, John Miller, Alexei Efros, and Moritz Hardt.
\newblock Test-time training with self-supervision for generalization under
  distribution shifts.
\newblock In {\em ICML}, 2020.

\bibitem{szegedy2014intriguing}
Christian Szegedy, Wojciech Zaremba, Ilya Sutskever, Joan Bruna, Dumitru Erhan,
  Ian Goodfellow, and Rob Fergus.
\newblock Intriguing properties of neural networks.
\newblock In {\em ICLR}, 2014.

\bibitem{torralba2011unbiased}
Antonio Torralba and Alexei~A. Efros.
\newblock Unbiased look at dataset bias.
\newblock In {\em CVPR}, 2011.

\bibitem{vapnik1998statistical}
Vladimir Vapnik.
\newblock {\em Statistical Learning Theory}.
\newblock Wiley, 1998.

\bibitem{venkateswara2017deep}
Hemanth Venkateswara, Jose Eusebio, Shayok Chakraborty, and Sethuraman
  Panchanathan.
\newblock Deep hashing network for unsupervised domain adaptation.
\newblock In {\em CVPR}, 2017.

\bibitem{volk2019towards}
Georg Volk, Stefan M{\"u}ller, Alexander von Bernuth, Dennis Hospach, and
  Oliver Bringmann.
\newblock Towards robust {CNN}-based object detection through augmentation with
  synthetic rain variations.
\newblock In {\em ITSC}, 2019.

\bibitem{volpi2018generalizing}
Riccardo Volpi, Hongseok Namkoong, Ozan Sener, John~C Duchi, Vittorio Murino,
  and Silvio Savarese.
\newblock Generalizing to unseen domains via adversarial data augmentation.
\newblock In {\em NeurIPS}, 2018.

\bibitem{wang2019learning}
Haohan Wang, Songwei Ge, Zachary Lipton, and Eric~P Xing.
\newblock Learning robust global representations by penalizing local predictive
  power.
\newblock In {\em NeurIPS}, 2019.

\bibitem{wang2021generalizing}
Jindong Wang, Cuiling Lan, Chang Liu, Yidong Ouyang, Wenjun Zeng, and Tao Qin.
\newblock Generalizing to unseen domains: A survey on domain generalization.
\newblock {\em arXiv:2103.03097}, 2021.

\bibitem{wang2020learning}
Shujun Wang, Lequan Yu, Caizi Li, Chi-Wing Fu, and Pheng-Ann Heng.
\newblock Learning from extrinsic and intrinsic supervisions for domain
  generalization.
\newblock In {\em ECCV}, 2020.

\bibitem{wang2021learning}
Zijian Wang, Yadan Luo, Ruihong Qiu, Zi Huang, and Mahsa Baktashmotlagh.
\newblock Learning to diversify for single domain generalization.
\newblock In {\em ICCV}, 2021.

\bibitem{wang2020fair}
Zeyu Wang, Klint Qinami, Ioannis Karakozis, Kyle Genova, Prem Nair, Kenji Hata,
  and Olga Russakovsky.
\newblock Towards fairness in visual recognition: Effective strategies for bias
  mitigation.
\newblock In {\em CVPR}, 2020.

\bibitem{wiles2022a}
Olivia Wiles, Sven Gowal, Florian Stimberg, Sylvestre-Alvise Rebuffi, Ira
  Ktena, Krishnamurthy~Dj Dvijotham, and Ali~Taylan Cemgil.
\newblock A fine-grained analysis on distribution shift.
\newblock In {\em ICLR}, 2022.

\bibitem{xu2020adversarial}
Minghao Xu, Jian Zhang, Bingbing Ni, Teng Li, Chengjie Wang, Qi Tian, and
  Wenjun Zhang.
\newblock Adversarial domain adaptation with domain mixup.
\newblock In {\em AAAI}, 2020.

\bibitem{yan2020improve}
Shen Yan, Huan Song, Nanxiang Li, Lincan Zou, and Liu Ren.
\newblock Improve unsupervised domain adaptation with mixup training.
\newblock {\em arXiv:2001.00677}, 2020.

\bibitem{zandieh2021scaling}
Amir Zandieh, Insu Han, Haim Avron, Neta Shoham, Chaewon Kim, and Jinwoo Shin.
\newblock Scaling neural tangent kernels via sketching and random features.
\newblock {\em arXiv:2106.07880}, 2021.

\bibitem{zhang2020adaptive}
Marvin Zhang, Henrik Marklund, Nikita Dhawan, Abhishek Gupta, Sergey Levine,
  and Chelsea Finn.
\newblock Adaptive risk minimization: A meta-learning approach for tackling
  group distribution shift.
\newblock {\em arXiv:2007.02931}, 2020.

\bibitem{zhang2021deep}
Xingxuan Zhang, Peng Cui, Renzhe Xu, Linjun Zhou, Yue He, and Zheyan Shen.
\newblock Deep stable learning for out-of-distribution generalization, 2021.

\bibitem{zhao2020domain}
Shanshan Zhao, Mingming Gong, Tongliang Liu, Huan Fu, and Dacheng Tao.
\newblock Domain generalization via entropy regularization.
\newblock In {\em NeurIPS}, 2020.

\bibitem{zhou2021domain}
Kaiyang Zhou, Ziwei Liu, Yu Qiao, Tao Xiang, and Chen~Change Loy.
\newblock Domain generalization: A survey.
\newblock {\em arXiv:2103.02503}, 2021.

\bibitem{zhou2020deep}
Kaiyang Zhou, Yongxin Yang, Timothy Hospedales, and Tao Xiang.
\newblock Deep domain-adversarial image generation for domain generalisation.
\newblock In {\em AAAI}, 2020.

\bibitem{zhou2020learning}
Kaiyang Zhou, Yongxin Yang, Timothy Hospedales, and Tao Xiang.
\newblock Learning to generate novel domains for domain generalization.
\newblock In {\em ECCV}, 2020.

\bibitem{zhou2020domain}
Kaiyang Zhou, Yongxin Yang, Yu Qiao, and Tao Xiang.
\newblock Domain generalization with mixstyle.
\newblock In {\em ICLR}, 2020.

\end{thebibliography}
}

%%%%%%%%% APPENDIX

\newpage
\appendix
\onecolumn

\section{Acknowledgements}
Nanyang Ye was supported by National Natural Science Foundation of China under Grant 62106139, in part by National Key R\&D Program of China 2018AAA0101200, in part by National Natural Science Foundation of China under Grant (No. 61829201, 61832013, 61960206002, 62061146002,  42050105, 62032020), in part by the Science and Technology Innovation Program of Shanghai (Grant 18XD1401800),  and in part by Shanghai Key Laboratory of Scalable Computing and Systems, and in part by BIREN Tech.

\section{Proofs}
\label{app:proofs}
\begin{proposition}
    \label{prop:zero-one-bound}
    For any non-negative probability functions $p$ and $q$ of two environments, diversity shift $D_\textnormal{div}(p, q)$ and correlation shift $D_\textnormal{cor}(p, q)$ are always bounded between 0 and 1, inclusively.
\end{proposition}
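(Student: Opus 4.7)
The plan is to bound each quantity separately, exploiting the defining property of the partition $\mathcal{S}, \mathcal{T}$ given in \eqref{eq:ST}. Non-negativity is immediate in both cases since the integrands are non-negative (absolute values of real-valued functions and the square root of a product of non-negative densities), so the actual work lies in establishing the upper bounds of $1$.

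For $D_\textnormal{div}$, the key observation is that on $\mathcal{S}$ we have $p(\mathbf{z}) \cdot q(\mathbf{z}) = 0$, which forces at least one of the two densities to vanish pointwise. Consequently $|p(\mathbf{z}) - q(\mathbf{z})| = p(\mathbf{z}) + q(\mathbf{z})$ on $\mathcal{S}$, and the integral splits as $\tfrac{1}{2}\int_{\mathcal{S}} p(\mathbf{z})\, d\mathbf{z} + \tfrac{1}{2}\int_{\mathcal{S}} q(\mathbf{z})\, d\mathbf{z}$. Since $p$ and $q$ are probability densities and $\mathcal{S}$ is contained in the full sample space, each of these pieces is bounded above by $\tfrac{1}{2}$, so $D_\textnormal{div}(p,q) \leq 1$.

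For $D_\textnormal{cor}$, I would first apply the triangle inequality to the inner sum: since $p(\cdot \mid \mathbf{z})$ and $q(\cdot \mid \mathbf{z})$ are probability mass functions over $\mathcal{Y}$,
\[
\sum_{\mathbf{y} \in \mathcal{Y}} \big| p(\mathbf{y} \given \mathbf{z}) - q(\mathbf{y} \given \mathbf{z}) \big| \;\leq\; \sum_{\mathbf{y} \in \mathcal{Y}} p(\mathbf{y} \given \mathbf{z}) \;+\; \sum_{\mathbf{y} \in \mathcal{Y}} q(\mathbf{y} \given \mathbf{z}) \;=\; 2.
\]
This reduces the task to showing $\int_{\mathcal{T}} \sqrt{p(\mathbf{z}) \cdot q(\mathbf{z})}\, d\mathbf{z} \leq 1$, which is exactly the statement that the Bhattacharyya coefficient is at most $1$ and follows from Cauchy--Schwarz applied to $\sqrt{p}, \sqrt{q} \in L^2$: $\int \sqrt{p(\mathbf{z}) \cdot q(\mathbf{z})}\, d\mathbf{z} \leq \sqrt{\int p(\mathbf{z})\, d\mathbf{z}} \cdot \sqrt{\int q(\mathbf{z})\, d\mathbf{z}} \leq 1$.

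Neither step is a genuine obstacle; the argument uses only elementary inequalities and the normalization of $p$ and $q$. The only thing to keep track of is the bookkeeping of constants, namely that the factor $\tfrac{1}{2}$ in the definition of $D_\textnormal{cor}$ exactly cancels the factor $2$ produced by the triangle inequality on the inner sum, yielding the bound $1$ rather than some larger constant. An analogous remark applies to $D_\textnormal{div}$, where the leading $\tfrac{1}{2}$ compensates for the fact that the integrand collapses to $p + q$ on $\mathcal{S}$.
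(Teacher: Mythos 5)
Your proof is correct and follows essentially the same route as the paper's: non-negativity is immediate, the inner sum over $\Ycal$ is bounded by $2$ via the triangle inequality, and the remaining integral of $\sqrt{p(\zB)\,q(\zB)}$ is at most $1$. The only cosmetic differences are that you bound this last integral by Cauchy--Schwarz (the Bhattacharyya coefficient) where the paper uses $2\sqrt{p(\zB)\,q(\zB)} \leq p(\zB) + q(\zB)$, and that on $\Scal$ you note the sharper equality $|p(\zB)-q(\zB)| = p(\zB)+q(\zB)$ where the paper simply applies the triangle inequality; both choices yield the same bounds.
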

\begin{proof}
    Apparently, $D_\textnormal{div}(p, q)$ and $D_\textnormal{cor}(p, q)$ are always non-negative, so we are left to prove the upper bound. By the triangle inequality and that every probability function sums up to one over all possible outcomes, we have
    \begin{equation}
        \begin{split}
            D_\textnormal{div}(p, q) &= \frac{1}{2} \int_{\Scal} |\:\! p(\zB) - q(\zB) |\, d \zB \leq \frac{1}{2} \int_{\Scal} \big[ p(\zB) + q(\zB) \big] \, d \zB \leq 1.
        \end{split}
    \end{equation}
    Similarly, we also have
    \begin{align}
        D_\textnormal{cor}(p, q)
        &= \frac{1}{2} \int_{\Tcal} \sqrt{p(\zB)\,q(\zB)} \sum_{\yB \in \Ycal} {\left|\:\! p(\yB \given \zB) - q(\yB \given \zB) \right|} \, d \zB \nonumber\\
        &\leq \frac{1}{2} \int_{\Tcal} \sqrt{p(\zB)\,q(\zB)} \sum_{\yB \in \Ycal} { \big[ p(\yB \given \zB) + q(\yB \given \zB) \big] } \, d \zB \\
        &= \frac{1}{2} \int_{\Tcal} 2\sqrt{p(\zB)\,q(\zB)} \, d \zB
        \leq \frac{1}{2} \int_{\Tcal} \big[ p(\zB) + q(\zB) \big] \, d \zB \leq 1. \nonumber \qedhere
    \end{align}
\end{proof}

\begin{lemma}
    \label{lemma:alpha-x}
    Suppose there are equal amount of examples from two environments,
    then for any example $\xB \in \Xcal$ sampled from either of the environments, the probability $\alpha(\xB)$ of a prediction $t(\xB) \in \{ 0, 1 \}$ that predicts the sampling environment of $\xB$ being correct is
    \begin{equation*}
        \frac{(1 - t(\xB)) \cdot p(\xB) + t(\xB) \cdot q(\xB)}{p(\xB) + q(\xB)}.
    \end{equation*}
\end{lemma}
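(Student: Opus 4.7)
The plan is to apply Bayes' theorem directly, with the equal-sample-size hypothesis supplying a uniform prior over the two environments. Introduce an indicator random variable $E \in \{0,1\}$ for the sampling environment. The hypothesis that equal numbers of examples are drawn from each environment translates into $P(E=0) = P(E=1) = 1/2$, while the definitions of $p$ and $q$ give the conditional densities $P(X = \xB \mid E = 0) = p(\xB)$ and $P(X = \xB \mid E = 1) = q(\xB)$.

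First I would compute the marginal density of $\xB$ via the law of total probability, obtaining $P(X = \xB) = \tfrac{1}{2}\bigl(p(\xB) + q(\xB)\bigr)$. Next, I would apply Bayes' theorem to get the two posteriors,
\begin{equation*}
P(E = 0 \mid X = \xB) = \frac{p(\xB)}{p(\xB) + q(\xB)}, \qquad P(E = 1 \mid X = \xB) = \frac{q(\xB)}{p(\xB) + q(\xB)},
\end{equation*}
where the factor of $1/2$ from the prior cancels between numerator and denominator.

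Finally, since the prediction $t(\xB)$ is correct precisely on the event $\{E = t(\xB)\}$, the desired probability $\alpha(\xB)$ equals $P(E = t(\xB) \mid X = \xB)$. Because $t(\xB) \in \{0,1\}$, a single algebraic expression captures both cases: $\alpha(\xB) = (1 - t(\xB)) \cdot P(E = 0 \mid X = \xB) + t(\xB) \cdot P(E = 1 \mid X = \xB)$, which simplifies to the stated closed form after a common-denominator step.

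There is no substantive obstacle in this argument; the only thing to be careful about is making the equal-size assumption explicit as a uniform prior on $E$ (so that the $1/2$'s in numerator and denominator cancel) and noting that the formula is a convex combination indexed by the binary $t(\xB)$ rather than a genuinely new case analysis.
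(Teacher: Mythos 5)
Your proposal is correct and follows essentially the same route as the paper's proof: uniform prior $P(E=0)=P(E=1)=\tfrac{1}{2}$ from the equal-size assumption, marginal $P(X=\xB)=\tfrac{1}{2}\bigl(p(\xB)+q(\xB)\bigr)$ by total probability, Bayes' theorem for the two posteriors, and the convex combination indexed by $t(\xB)$. No gaps or meaningful differences to note.
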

\begin{proof}
    First of all, we need to define several quantities. Let the probability of an example being sampled from the first environment be $P(E = 0)$ and the probability of an example being sampled from the second environment be $P(E = 1)$.
    Since there are equal amount of examples from both environment, we have $P(E=0) = P(E=1) = \frac{1}{2}$.
    The probability of an example (from one of the environments) taking on a particular value $\xB$ is $P(X=\xB \given E=0)$ and $P(X=\xB \given E=1)$.
    By definition, $P(X=\xB \given E=0) = p(\xB)$ and $P(X=\xB \given E=1) = q(\xB)$.
    The probability of an example taking on a particular value $\xB$ (regardless of the environment) is given by
    \begin{equation}
        \begin{split}
            P(X=\xB)
                &= P(X=\xB, E=0) + P(X=\xB, E=1) \\
                &= P(X=\xB \given E=0) \cdot P(E=0) + P(X=\xB \given E=1) \cdot P(E=1) = \frac{p(\xB) + q(\xB)}{2}.
        \end{split}
    \end{equation}
    The probability of a given example $\xB$ being sampled from the first environment is
    \begin{equation}
        \begin{split}
            P(E = 0 \given X=\xB)
                &= \frac{P(X=\xB \given E=0) \cdot P(E=0)}{P(X=\xB)}
                 = \frac{p(\xB)}{2P(X=\xB)}
                 = \frac{p(\xB)}{p(\xB) + q(\xB)}.
        \end{split}
    \end{equation}
    Similarly, the probability of $\xB$ being sampled from the second environment is
    \begin{equation}
        P(E = 1 \given X = \xB) = \frac{q(\xB)}{p(\xB) + q(\xB)}.
    \end{equation}
    Together, the overall probability of some prediction $t(\xB)$ being correct is
    \begin{equation}
        \alpha(\xB)
        = (1 - t(\xB)) \cdot P(E = 0 \given X = \xB) + t(\xB) \cdot P(E = 1 \given X = \xB)
        = \frac{(1 - t(\xB)) \cdot p(\xB) + t(\xB) \cdot q(\xB)}{p(\xB) + q(\xB)}.
    \end{equation}
\end{proof}

\feature*
\begin{proof}
    To formally state that the network attains optimal performance in classifying the environments, we note that for any example $\xB \in \Xcal$, the probability $\alpha(\xB)$ of a prediction $t(\xB) \in \{ 0, 1 \}$ being correct is
    \begin{equation}
        \frac{(1 - t(\xB)) \cdot p(\xB) + t(\xB) \cdot q(\xB)}{p(\xB) + q(\xB)}.
    \end{equation}
    This has been shown by Lemma~\ref{lemma:alpha-x}.
    Hence, the overall classification accuracy is given by
    \begin{equation}
        \Ebb_{X} [ \alpha(\xB) ] = \Ebb_{X} \left[ \frac{(1 - t(\xB)) \cdot p(\xB) + t(\xB) \cdot q(\xB)}{p(\xB) + q(\xB)} \right] = \frac{1}{2} \int_{\Xcal}{(1 - t(\xB)) \cdot p(\xB) + t(\xB) \cdot q(\xB)}.
    \end{equation}
    The best possible classification accuracy that can be attained by the network is determined by the environments alone, which is
    \begin{equation}
        \begin{split}
            \frac{1}{2} \int_{\Xcal} \max_{t} \left\{(1 - t(\xB)) \cdot p(\xB) + t(\xB) \cdot q(\xB)\right\}
            &= \frac{1}{2} \int_{\Xcal} \max \{p(\xB), q(\xB)\}.
        \end{split}
    \end{equation}
    The above maximum is attained by $t^\ast$ satisfying the following equations for every $\xB$ such that $p(\xB) \neq q(\xB)$:
    \begin{equation}
        \label{eq:t-star}
        t^\ast(\xB) =
        \begin{cases}
            0 & \text{if } p(\xB) > q(\xB) \\
            1 & \text{if } p(\xB) < q(\xB)
        \end{cases}.
    \end{equation}
    In other words, $t^\ast$ predicts the environment from which an example is more likely sampled.
    Suppose for some $\xB$ such that $p(\xB) > q(\xB)$ we have (i) $g(\xB) = \zB$ and (ii) $\Hat{p}(\yB, \zB) = \Hat{q}(\yB, \zB)$ for every $\yB \in \Ycal$, then there must exist some $\xB^\prime \neq \xB$ such that $p(\xB^\prime) < q(\xB^\prime)$ with $f(\xB^\prime) = f(\xB)$ and $g(\xB^\prime) = g(\xB)$.
    It follows that $t(\xB) = t(\xB^\prime)$ because $\xB$ and $\xB^\prime$ both map to the same $\yB$ and $\zB$, which makes no difference to $h$.
    Finally, it is clear to see that $t \neq t^\ast$ and therefore the network does not attain the optimal performance.
\end{proof}

\section{Practical Estimation of Diversity and Correlation Shift}
\label{app:estimation-method}

In this section, we provide complete pseudo codes of our estimation methods for diversity and correlation shift, supported by more theoretical justifications.

\subsection{Pseudo codes and supporting theoretical justifications}
\label{app:pseudo-codes}
\makeatletter
\newcommand{\multiline}[1]{%
  \begin{tabularx}{\dimexpr\linewidth-\ALG@thistlm}[t]{@{}X@{}}
    #1
  \end{tabularx}
}
\makeatother
\begin{algorithm}[H]\small
    \caption{\;Training procedure of feature extractor and environment classifier}
    \label{alg:train-extractor}
    \begin{algorithmic}[1]
        \Require Training environment $\Ecal_\textnormal{tr}$ and test environments $\Ecal_\textnormal{te}$; mini-batch size $N$; number of training steps $T$; loss function $\ell$.
        \Ensure Feature extractor $g: \Xcal \to \Fcal$; environment classifier $h: \Fcal \times \Ycal \to [0, 1]$.
        \State Initialize network parameters;
        \For{each training step $t \gets 1, \dots, T$}
            \State \multiline{sample a mini-batch of training examples $\{(\xB_i, \yB_i, e_i)\}_{i=1}^{N}$ from $\Ecal_\textnormal{tr}$ (indexed by $e_i = 0$) and $\Ecal_\textnormal{te}$ (indexed by $e_i = 1$) while ensuring equal sampling probability for the two environments and for every distinct value $\yB \in \Ycal$ in each environment;}
            \For{each example $(\xB_i, \yB_i, e_i)$ in the mini-batch}
                \State $\Hat{e}_i \gets h(g(\xB_i), \yB_i)$;
                \State compute loss $\ell (\Hat{e}_i, e_i)$ and back-propagate gradients;
            \EndFor
            \State update network parameters by the accumulated gradients, and then reset the gradients.
        \EndFor
    \end{algorithmic}
\end{algorithm}
\vspace{1.5\parskip}
The training procedure of our feature extractor $g$ and environment classifier $h$ is described in Algorithm~\ref{alg:train-extractor}.
Note that in line 3, we use sample reweighting to ensure the class balance in every environment so that the following assumption holds.

\begin{assumption}
    \label{assumption:pqy}
    For every $\yB \in \Ycal$, $p(\yB) = q(\yB) > 0$, \ie, there is no label shift.
\end{assumption}

Each environment defines a distribution over $\Xcal$.
The two environments share the same labeling rule $f: \Xcal \to \Ycal$.
The feature extractor $g: \Xcal \to \Fcal$ maps every input $\xB \in \Xcal$ to an $d$-dimensional feature vector $\zB \in \Fcal$.
Put it together, the labeling rule $f$, the feature extractor $g$ together with the two distributions over $\Xcal$ induces two probability functions $\Hat{p}$ and $\Hat{q}$ over $\Xcal \times \Ycal \times \Fcal$.

As mentioned in the paper, for a practical estimation of diversity and correlation shift, we first partition $\Fcal$ into
\begin{equation}
    \Scal^\prime \definedas \{ \zB \in \Fcal \mid \Hat{p}(\zB) \cdot \Hat{q}(\zB) = 0 \} \quad\textnormal{and}\quad
    \Tcal^\prime \definedas \{ \zB \in \Fcal \mid \Hat{p}(\zB) \cdot \Hat{q}(\zB) \neq 0 \},
\end{equation}
and then estimate the shifts by
\begin{align}
    \label{eq:div-estimator}
    D^\prime_\textnormal{div}(\Hat{p}, \Hat{q})
    &\definedas \frac{1}{2} \int_{\Scal^\prime} |\:\! \Hat{p}(\zB) - \Hat{q}(\zB) |\, d \zB, \\
    \label{eq:cor-estimator}
    D^\prime_\textnormal{cor}(\Hat{p}, \Hat{q})
    &\definedas \frac{1}{2} \int_{\Tcal^\prime} \sqrt{\Hat{p}(\zB)\,\Hat{q}(\zB)} \sum_{\yB \in \Ycal} {\left|\:\! \Hat{p}(\yB \given \zB) - \Hat{q}(\yB \given \zB) \right|} \, d \zB,
\end{align}
where we have simply replaced $(p, q)$ with their empirical estimates $(\Hat{p}, \Hat{q})$ and replaced $(\Scal, \Tcal)$ with $(\Scal^\prime, \Tcal^\prime)$ in Definition~\ref{def:div-cor}.
One might notice a slight difference between the definition of $(\Scal, \Tcal)$ and the definition of $(\Scal^\prime, \Tcal^\prime)$, which is that $(\Scal, \Tcal)$ are subsets of $\Zcal_2$, and therefore only contain non-causal features, whereas $(\Scal^\prime, \Tcal^\prime)$ are subsets of $\Fcal$, which could contain representations of both $\Zcal_1$ and $\Zcal_2$.
Fortunately, this is not an issue for two reasons.
First, the features in $\Scal^\prime$ have no shared support in the two environments, \ie $\Hat{p}(\zB) \cdot \Hat{q}(\zB) = 0$.
Recall that
\begin{equation}
    \label{eq:pqz1-app}
    p(\zB) \cdot q(\zB) \neq 0 \:\land\: \forall\,\yB \in \Ycal: p(\yB \given \zB) = q(\yB \given \zB)
\end{equation}
holds for every $\zB \in \Zcal_1$.
This suggests that we would have $\Hat{p}(\zB) \cdot \Hat{q}(\zB) \neq 0$ for every representation $\zB$ of $\Zcal_1$, and therefore the integral over $\Scal^\prime$ would exclude these features.
Second, \eqref{eq:pqz1-app} also suggests that the term $\left|\:\! \Hat{p}(\yB \given \zB) - \Hat{q}(\yB \given \zB) \right|$ would be relatively small for every $\yB \in \Ycal$ and representation $\zB$ of $\Zcal_1$, so the integral over $\Tcal^\prime$ will not be affected by representations of $\Zcal_1$.

Once $g$ is trained properly, inputs from training and test environments are all processed by $g$. Then the output features $\Fcal$ are gathered into $\Fcal_\textnormal{tr}$ and $\Fcal_\textnormal{te}$, which are used for estimating the shifts as in Algorithm~\ref{alg:est-shifts} below.

\begin{figure}[H]
\begin{minipage}{\columnwidth}
\begin{algorithm}[H]\small
    \caption{\;Estimation of diversity and correlation shift}
    \label{alg:est-shifts}
    \begin{algorithmic}[1]
        \Require Features $\Fcal_\textnormal{tr}$ and $\Fcal_\textnormal{te}$ from training and test environments; importance sampling size $M$; thresholds $\epsilon_\textnormal{div}$ and $\epsilon_\textnormal{cor}$.
        \Ensure Estimated diversity shift $D^\prime_\textnormal{div}$; estimated correlation shift $D^\prime_\textnormal{cor}$.
        \State \textcolor{gray}{\texttt{\# Prepare for the estimation}}
        \State $\Fcal \gets \Fcal_\textnormal{tr} \cup \Fcal_\textnormal{te}$;
        \State scale $\Fcal$ to zero mean and unit variance;
        \State $\Hat{w} \gets$ fit by KDE the distribution of $\Fcal$;
        \State $\Fcal_\textnormal{tr}^\prime, \Fcal_\textnormal{te}^\prime \gets$ split $\Fcal$ to recover the original partition;
        \State $\Hat{p}, \Hat{q} \gets$ fit by KDE the distributions of $\Fcal_\textnormal{tr}^\prime$ and $\Fcal_\textnormal{te}^\prime$;
        \State
        \State \textcolor{gray}{\texttt{\# Estimate diversity shift}}
        \State $D^\prime_\textnormal{div} \gets 0$;
        \For{$t \gets 1, \dots, M$}
            \State $\zB \gets$ sample from $\Hat{w}$;
            \If{$\Hat{p}(\zB) < \epsilon_\textnormal{div}$ \textbf{or} $\Hat{q}(\zB) < \epsilon_\textnormal{div}$}
                \State $D^\prime_\textnormal{div} \gets D^\prime_\textnormal{div} + |\;\! \Hat{p}(\zB) - \Hat{q}(\zB) | \mathbin{/} \Hat{w}(\zB)$;
            \EndIf
        \EndFor
        \State $D^\prime_\textnormal{div} \gets D^\prime_\textnormal{div} \mathbin{/} 2M$;
        \State
        \State \textcolor{gray}{\texttt{\# Estimate correlation shift}}
        \State $D^\prime_\textnormal{cor} \gets 0$;
        \For{each $\yB \in \Ycal$}
            \State $\Hat{p}_\yB, \Hat{q}_\yB \gets$ fit by KDE the distributions of the subsets of $\Fcal_\textnormal{tr}^\prime$ and $\Fcal_\textnormal{tr}^\prime$ that correspond to the inputs with label $\yB$;
            \For{$t \gets 1, \dots, M$}
                \State $\zB \gets$ sample from $\Hat{w}$;
                \If{$\Hat{p}(\zB) > \epsilon_\textnormal{cor}$ \textbf{and} $\Hat{q}(\zB) > \epsilon_\textnormal{cor}$}
                    \State $D^\prime_\textnormal{cor} \gets D^\prime_\textnormal{cor} + |\;\! \Hat{p}_\yB(\zB) \sqrt{\Hat{q}(\zB)/\Hat{p}(\zB)} - \Hat{q}_\yB(\zB) \sqrt{\Hat{p}(\zB)/\Hat{q}(\zB)}| \mathbin{/} \Hat{w}(\zB)$;
                \EndIf
            \EndFor
        \EndFor
        \State $D^\prime_\textnormal{cor} \gets D^\prime_\textnormal{cor} \mathbin{/} 2M|\Ycal|$.
    \end{algorithmic}
\end{algorithm}
\end{minipage}
\end{figure}

\subsection{Implementation details}
Same as the networks on which the algorithms in \Cref{sec:benchmark} are trained, we use MLP for Colored MNIST and ResNet-18 for other datasets as the feature extractors.
All ResNet-18 models are pretrained on ImageNet.
The feature extractors are optimized by Adam with a fixed learning rate $0.0003$ for $T=2000$ iterations. The batch size $N$ we used is $32$ for each environment, and we set the feature dimension $m=8$.
For every random data split, we keep 90\% data for training and use the rest 10\% data for validation. We choose the models maximizing the accuracy (in predicting the environments) on validation sets.
For datasets with multiple training environments and test environments, the network is trained to discriminate all the environments.
The loss function $\ell$ is the cross-entropy loss in our experiments.
As for Algorithm~\ref{alg:est-shifts}, the importance sampling size $M = 10000$, and we empirically set the thresholds $\epsilon_\textnormal{div} = 1 \times 10^{-12}$ and $\epsilon_\textnormal{cor} = 5 \times 10^{-4}$. We use Gaussian kernels for all the KDEs.

\section{Discussion on the Convergence of the Hidden Feature}
\label{app:convergence}
In this section, we investigate the convergence of the features extracted by the neural network. We base our analysis on the neural tangent kernel (NTK)~\cite{jacot2018neural, zandieh2021scaling}.
We focus on the dynamic of the output of the feature extractor instead of the output of the entire neural network.
For simplicity, consider a fully-connected neural network with layers numbered from $0$ (input) to $L$ (output), each containing $n_0,\cdots, n_{L-1}$, and $n_L=1$ neurons.
The network uses a Lipschitz, twice-differentiable nonlinearity function $\sigma : \mathbb{R}\rightarrow\mathbb{R}$ with bounded second derivative.
We define the network function by $f(x;\theta)\definedas h^{(L)}(x;\theta)$, where the function $h^{(\ell)}:\mathbb{R}^{n_0}\rightarrow\mathbb{R}^{n_\ell}$ and function $g^{(\ell)}:\mathbb{R}^{n_0}\rightarrow\mathbb{R}^{n_\ell}$ are defined from the $0$-th layer to the $L$-th layer recursively by
\begin{align*}
	g^{(0)}(x;\theta) &\definedas x,\\
	h^{(\ell+1)}(x;\theta) &\definedas \frac{1}{\sqrt{n_\ell}}W^{(\ell)}g^{(\ell)}(x;\theta),\\
	g^{(\ell)}(x;\theta) &\definedas \sigma(h^{(\ell)}(x;\theta)).
\end{align*}
We refer to the output of the ($L-1$)-th layer as the extracted feature, \ie $h^{(L-1)}(x;\theta)$.\footnote{The output of any of the intermediate layer can be regarded as the extracted feature. Our analysis can be easily extend to other scenarios.} Given a training dataset $\{(x_i,y_i)\}_{i=1}^{n}\subset \mathbb{R}^{n_0}\times\mathbb{R}$, consider training the neural network by minimizing the loss function over
training data by gradient descent: $\sum_{i=1}^{n} loss(f(x_i;\theta),y_i)$.
\begin{restatable}[]{theorem}{convergenceoffeature}
   Assume that the non-linear activation $\sigma$ is Lipschitz continuous, twice differentiable with bounded second order derivative. As the width of the hidden layers increase to infinity, sequentially, the hidden layer output $h^{(L-1)}(x;\theta)$ converges to the solution of the differential equation
   \begin{align*}
       \frac{du^{(L-1,1)}(t)}{dt} &= -H^{(L-1,1)}G^{(L-1,1)}(t),\\
	&\vdots\\
	\frac{du^{(L-1,n_{L-1})}(t)}{dt} &= -H^{(L-1,n_{L-1})}G^{(L-1,n_{L-1})}(t).
   \end{align*}
   where $u^{(\cdot,\cdot)}(t)$ is the vectorized form of $h^{(L-1)}(x;\theta)$, $H^{(\cdot,\cdot)}$ is the neural tangent kernel corresponding to hidden layer output, and $G^{(\cdot,\cdot)}(t)$ is the vectorized form of the derivative of the loss function corresponding to the hidden later output.
\end{restatable}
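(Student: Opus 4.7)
The plan is to adapt the neural tangent kernel argument of \cite{jacot2018neural} from the scalar network output $f$ to the vector-valued penultimate representation $h^{(L-1)}$. The key observation is that the parameter trajectory is driven by $\nabla_\theta f$, but we can still read off the dynamics of any other smooth function of $\theta$ by pulling the chain rule through. Concretely, writing $u^{(L-1,k)}(t)$ for the stack of $h^{(L-1,k)}(x_i;\theta(t))$ over the training inputs and differentiating along gradient flow gives
\begin{equation*}
\frac{d}{dt} u^{(L-1,k)}(t) = -\sum_{m=1}^{n_{L-1}} \widetilde H^{(L-1,k,m)}(\theta(t))\, G^{(L-1,m)}(t),
\end{equation*}
where $G^{(L-1,m)}_j(t) = \partial\,\mathrm{loss}/\partial h^{(L-1,m)}(x_j)$ is exactly the vectorized loss gradient projected onto the penultimate features, and $\widetilde H^{(L-1,k,m)}_{ij}(\theta) = \langle \nabla_\theta h^{(L-1,k)}(x_i;\theta),\, \nabla_\theta h^{(L-1,m)}(x_j;\theta)\rangle$ is the feature tangent kernel. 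This already has the shape announced in the theorem, modulo the off-diagonal ($k\neq m$) terms and the $\theta$-dependence of the kernel.

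The second step is to take the sequential infinite-width limit $n_1\to\infty, \dots, n_{L-2}\to\infty$ at initialization, following the Gaussian-process induction of \cite{jacot2018neural}. Under the $1/\sqrt{n_\ell}$ scaling in the definition of $h^{(\ell+1)}$ together with i.i.d.\ Gaussian weights, each preactivation $h^{(\ell)}$ converges to a centred Gaussian process with covariance $\Sigma^{(\ell)}$ obeying the standard dual-activation recursion, and the individual coordinates $h^{(L-1,k)}$ become i.i.d.\ copies across $k$. This independence is precisely what forces the off-diagonal kernels $\widetilde H^{(L-1,k,m)}$ with $k\neq m$ to vanish in the limit, while each diagonal $H^{(L-1,k)} \definedas \widetilde H^{(L-1,k,k)}$ converges in probability to a deterministic limiting NTK expressed through $\Sigma^{(\ell)}$ and its derivatives for $\ell\leq L-1$. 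The Lipschitz and bounded-$\sigma''$ hypotheses are exactly what make these dual-activation integrals well-defined and continuous, which is needed to pass to the limit term-by-term.

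The main obstacle is the third step: showing that $H^{(L-1,k)}(\theta(t))$ stays equal to $H^{(L-1,k)}(\theta(0))$ up to $o(1)$ uniformly on compact time intervals, so that the ODE collapses to the autonomous linear system in the statement. I would follow the now-standard route used for the output-side NTK: a Gr\"onwall-type argument shows that the loss gradient stays $O(1)$ because $f(x_i;\theta(t))$ itself stays $O(1)$ under the NTK scaling, hence $\|\theta(t)-\theta(0)\|$ grows only polynomially in $t$ while the per-layer Jacobian norms diverge only polylogarithmically, so the perturbation to each weight matrix is $o(1)$ in operator norm. Since $h^{(L-1)}$ sits one layer closer to the input than $f$, its tangent kernel simply drops the parameters coming from $W^{(L)}$, and the Lipschitz-in-$\theta$ bounds on layer Jacobians transfer verbatim. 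The delicate point is that $G^{(L-1,m)}(t)$ itself depends on the evolving $f$, which forces a coupled argument: one first freezes the output-layer NTK to control $f(\theta(t))$ (and hence $G$), then feeds that bound back into the drift of $u^{(L-1,k)}$.

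Once kernel constancy is established, the ODE reduces to the stated linear autonomous form $du^{(L-1,k)}/dt = -H^{(L-1,k)}\,G^{(L-1,k)}(t)$. The driving field is Lipschitz in the trajectory (through the smoothness of the loss and of $f$ as a function of $u^{(L-1,\cdot)}$ and the outer weights), so continuous dependence of ODE solutions on their coefficients upgrades convergence of the velocity field to convergence of the trajectory itself, completing the proof.
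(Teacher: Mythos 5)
Your proposal follows essentially the same route as the paper's proof: differentiate the penultimate-layer outputs along the gradient flow via the chain rule, identify the feature-level tangent kernel $[H^{(L-1,k)}(t)]_{a,b}=\bigl\langle \partial h^{(L-1,k)}(x_a;\theta(t))/\partial\theta^{(L-1)}(t),\,\partial h^{(L-1,k)}(x_b;\theta(t))/\partial\theta^{(L-1)}(t)\bigr\rangle$, and pass to the sequential infinite-width limit to freeze this kernel, which yields exactly the stated autonomous ODE system. The only difference is one of detail: the paper simply invokes Theorems 1 and 2 of \cite{jacot2018neural} for the limiting, time-independent kernel (implicitly also for the vanishing of cross-component terms), whereas you sketch re-derivations of these ingredients (Gaussian-process limit at initialization, vanishing off-diagonal kernels, Gr\"onwall-type kernel constancy), which is sound but goes beyond what the paper's proof spells out.
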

\begin{proof}
    Let $\theta^{(\ell)}$ denote the parameters in the first $\ell$ layers. Then the parameters $\theta^{(\ell)}$ evolve according to the differential equation
    \begin{align*}
    	\frac{d\theta^{(\ell)}(t)}{dt} &= -\nabla_{\theta^{(\ell)}(t)} loss(f(x_i;\theta(t)),y_i)\\
    	&= -\sum_{i=1}^{n}\left[\frac{\partial h^{(\ell)}(x_i;\theta(t))}{\theta^{(\ell)}(t)}\right]^T \nabla_{h^{(\ell)}(x_i;\theta(t))} loss(f(x_i;\theta(t)),y_i),
    \end{align*}
    where $t\ge0$ is the continuous time index, which is commonly used in the analysis of the gradient descent with infinitesimal learning rate. The evolution of the feature $h^{(L-1)}(x_j;\theta)$ of the input $x_j$, $j\in [n]$ can be written as
    \small
    \begin{equation*}
    	\frac{dh^{(L-1)}(x_j;\theta(t))}{dt} = -\sum_{i=1}^{n}\frac{\partial h^{(L-1)}(x_j;\theta(t))}{\theta^{(L-1)}(t)}\left[\frac{\partial h^{(L-1)}(x_i;\theta(t))}{\theta^{(L-1)}(t)}\right]^T \nabla_{h^{(L-1)}(x_i;\theta(t))} loss(f(x_i;\theta(t)),y_i).
    \end{equation*}
    \normalsize
    Let $G^{(L-1,k)}(t) = (\nabla_{h^{(L-1,k)}(x_j;\theta(t))} loss(f(x_j;\theta(t)),y_j))_{j\in[n]}$ denote the gradient of the loss function corresponding to the $k$-th output of the ($L-1$)-th intermediate layer at time $t$, %$u(t) = (f(x_j;\theta(t)))_{j\in[n]}$
    and $u^{(L-1,k)}(t) = (h^{(L-1,k)}(x_j;\theta(t)))_{j\in[n]}$ denote the $k$-th output of the $\ell$-th intermediate layer at time $t$, respectively. The evolution of the feature can be written more compactly as
    \begin{equation*}
    	\frac{du^{(L-1,k)}(t)}{dt} = -H^{(L-1,k)}(t)G^{(L-1,k)}(t),
    \end{equation*}
    where $H^{(L-1,k)}(t)$ is the matrix defined as
    \begin{align*}
    	[H^{(L-1,k)}(t)]_{a,b}
    	&=\left\langle\frac{\partial h^{(L-1,k)}(x_a;\theta(t))}{\partial\theta^{(L-1)}(t)},\frac{\partial h^{(L-1,k)}(x_b;\theta(t))}{\partial\theta^{(L-1)}(t)}\right\rangle.
    \end{align*}
    Applying Theorem 1 and Theorem 2 in \cite{jacot2018neural}, as the width of the hidden layers $n_1,\cdots, n_{L-1} \rightarrow \infty$, sequentially, we have $H^{(L-1,k)}(t)$ converges to a fixed kernel $H^{(L-1,k)}$. Thus, the extracted feature (\ie the output of the ($L-1$)-th layer) converges to the solution of the system of the differential equations below
    \begin{align*}
    	\frac{du^{(L-1,1)}(t)}{dt} &= -H^{(L-1,1)}G^{(L-1,1)}(t),\\
    	&\vdots\\
    	\frac{du^{(L-1,n_{L-1})}(t)}{dt} &= -H^{(L-1,n_{L-1})}G^{(L-1,n_{L-1})}(t).
    \end{align*}
\end{proof}

\section{Effects of the Neural Network Architecture}
\label{app:arch}
We have tested neural network architecture (MLP) with different number of parameters on Colored MNIST, as shown in \Cref{table:arch-cmnist} (estimated values), \Cref{table:cmnist-ttest} (t-test) and \Cref{table:epochs} (different number of training epochs). The results demonstrate no significant difference in the effect of various neural network architectures on the estimation of diversity and correlation shift.
The results tend to converge as model capacity increases. We also compared with other types of architectures (\eg, EfficientNet) on more complicated datasets, PACS, in \Cref{table:arch-pacs}, where we have observed similar trends.
The network architecture can have effects on the estimation, but we expect the architecture to have enough expressive power.

\begin{table}[H]
\small
\centering
\begin{threeparttable}
\begin{tabular}{ccc|ccccc}
\toprule
Network &\# Params &Type &Split 0 &Split 1 &Split 2 &Split 3 &Split 4 \\
\midrule
MLP (dim=16)     &0.0067M & $D_\textnormal{div}$ &0.0001 &0.0001 &0.0002 &0.0001 &0.0001 \\
~   &~   & $D_\textnormal{cor}$ &0.8280 &0.7838 &0.7752 &0.8004 &0.7727 \\
MLP (dim=32)     &0.0140M & $D_\textnormal{div}$ &0.0001 &0.0002 &0.0002 &0.0002 &0.0000 \\
~   &~   & $D_\textnormal{cor}$ &0.7067 &0.7055 &0.7011 &0.4299 &0.6807 \\
MLP (dim=64)     &0.0299M & $D_\textnormal{div}$ &0.0000 &0.0000 &0.0001 &0.0000 &0.0000 \\
~   &~   & $D_\textnormal{cor}$ &0.6901 &0.7279 &0.7096 &0.7366 &0.7195 \\
MLP (dim=200)    &0.1205M & $D_\textnormal{div}$ &0.0000 &0.0003 &0.0000 &0.0000 &0.0000 \\
~   &~   & $D_\textnormal{cor}$ &0.6780 &0.5684 &0.6628 &0.6679 &0.6305\\
MLP (dim=390)    &0.3089M & $D_\textnormal{div}$ &0.0000 &0.0000 &0.0000 &0.0000 &0.0000 \\
~   &~   & $D_\textnormal{cor}$ &0.6971 &0.6788 &0.6680 &0.7343 &0.6846 \\
MLP (dim=1024)    &1.4603M & $D_\textnormal{div}$ &0.0000 &0.0000 &0.0000 &0.0000 &0.0000 \\
~   &~  & $D_\textnormal{cor}$ &0.6609 &0.6639 &0.6701 &0.6765 &0.6248 \\
ResNet-18     &11.1724M & $D_\textnormal{div}$ &0.0000 &0.0000 &0.0000 &0.0000 &0.0001 \\
~   &~    & $D_\textnormal{cor}$ &0.6677 &0.6833 &0.6802 &0.5834 &0.5809 \\
\bottomrule
\end{tabular}
\end{threeparttable}
\caption{Estimated diversity and correlation shift of Colored MNIST on networks with different capacity.}
\label{table:arch-cmnist}
\end{table}

\begin{table}[H]
\small
\centering
\begin{threeparttable}
\begin{tabular}{cc|ccccccc}
\toprule
Network &\makecell{\# Params} &\makecell{MLP\\(dim=16)} &\makecell{MLP\\(dim=32)} &\makecell{MLP\\(dim=64)} &\makecell{MLP\\(dim=200)} &\makecell{MLP\\(dim=390)} &\makecell{MLP\\(dim=1024)} &ResNet-18 \\
\midrule
MLP (dim=16)     &0.0067M &1.0000 &0.6740 &0.8348 &0.7189 &0.7799 &0.7035 & 0.6581 \\
MLP (dim=32)     &0.0140M &0.6740 &1.0000 &0.8277 &0.9480 &0.8832 &0.9639 &0.9852 \\
MLP (dim=64)     &0.0299M &0.8348 &0.8277 &1.0000 &0.8777 &0.9428 &0.8614 &0.8116 \\
MLP (dim=200)    &0.1205M &0.7189 &0.9480 &0.8777 &1.0000 &0.9343 &0.9838 &0.9325 \\
MLP (dim=390)    &0.3089M &0.7799 &0.8832 &0.9428 &0.9343 &1.0000 &0.9180 &0.8672 \\
MLP (dim=1024)   &1.4603M &0.7035 &0.9639 &0.8614 &0.9838 &0.9180 &1.0000 &0.9485 \\
ResNet-18        &11.1724M &0.6581 &0.9852 &0.8116 &0.9325 &0.8672 &0.9485 &1.0000 \\
\bottomrule
\end{tabular}
\caption{T-test on the estimated values between different architectures on Colored MNIST.}
\label{table:cmnist-ttest}
\end{threeparttable}
\end{table}

\begin{table}[H]
\small
\centering
\begin{threeparttable}
\begin{tabular}{ccc|ccccc}
\toprule
Network &\# Epochs &Type &Split 0 &Split 1 &Split 2 &Split 3 &Split 4 \\
\midrule
\makecell{EfficientNet-b0 (4.67M)}    &500 &\makecell{ $D_\textnormal{div}$\\ $D_\textnormal{cor}$} &\makecell{0.0000\\0.7667} &\makecell{0.0000\\0.5177} &\makecell{0.0000\\0.6465} &\makecell{0.0000\\0.1567} &\makecell{0.0000\\0.1690} \\
\makecell{EfficientNet-b0 (4.67M)}    &1000 &\makecell{ $D_\textnormal{div}$\\ $D_\textnormal{cor}$} &\makecell{0.0045\\0.9172} &\makecell{0.0038\\0.8682} &\makecell{0.0016\\0.8099} &\makecell{0.0036\\0.9293} &\makecell{0.0047\\0.9158} \\
\makecell{EfficientNet-b0 (4.67M)}    &2000 &\makecell{ $D_\textnormal{div}$\\ $D_\textnormal{cor}$} &\makecell{0.0028\\0.8350} &\makecell{0.0023\\0.7874} &\makecell{0.0016\\0.8428} &\makecell{0.0022\\0.8871} &\makecell{0.0020\\0.8819} \\
\makecell{EfficientNet-b0 (4.67M)}    &4000 &\makecell{ $D_\textnormal{div}$\\ $D_\textnormal{cor}$} &\makecell{0.0007\\0.9248} &\makecell{0.0010\\0.9204} &\makecell{0.0018\\0.8211} &\makecell{0.0005\\0.8516} &\makecell{0.0020\\0.9763} \\
\bottomrule
\end{tabular}
\end{threeparttable}
\caption{Different number of training epochs on Colored MNIST.}
\label{table:epochs}
\end{table}

\begin{table}[H]
\centering
\begin{adjustbox}{width=\linewidth}
\begin{threeparttable}
\begin{tabular}{ccc|cccc|cccc|cccc|cccc|cccc}
\toprule
Network &\# Params &Type & \multicolumn{4}{c}{Split 0} & \multicolumn{4}{c}{Split 1} & \multicolumn{4}{c}{Split 2} & \multicolumn{4}{c}{Split 3} & \multicolumn{4}{c}{Split 4} \\
~& ~& ~ &0 &1 &2 &3  &0 &1 &2 &3 &0 &1 &2 &3 &0 &1 &2 &3 &0 &1 &2 &3 \\
\midrule
ResNet-18     &11.1724 & $D_\textnormal{div}$ &0.9655 &0.7852 &0.8916 &0.9644 &0.8106 &0.9188 &0.8527 &0.7553 &0.7339 &0.9283 &0.8325 &0.5616 &0.9660 &0.8540 &0.7791 &0.7075 &0.8183 &0.6170 &0.8181 &0.8061
\\
~   &~    & $D_\textnormal{cor}$ &0.0000 &0.0001 &0.0026 &0.0000 &0.0016 &0.0011 &0.0024 &0.0000 &0.0015 &0.0000 &0.0003 &0.0000 &0.0008 &0.0000 &0.0013 &0.0000 &0.0000 &0.0000 &0.0004 &0.0000 \\
EfficientNet-b0     &4.6676 & $D_\textnormal{div}$ &0.9169 &0.7012 &0.6586 &0.9446 &0.8742 &0.9897 &0.9708 &1.5537 &1.0244 &0.9112 &0.9031 &0.5640 &0.8347 &0.9876 &1.0300 &0.6176 &0.7596 &1.1299 &0.9945 &1.2978 \\
~   &~  & $D_\textnormal{cor}$ &0.0000 &0.0012 &0.0004 &0.0000 &0.0000 &0.0000 &0.0000 &0.0000 &0.0003 &0.0001 &0.0023 &0.0000 &0.0001 &0.0003 &0.0005 &0.0000 &0.0007 &0.0007 & 0.0003 &0.0000 \\
EfficientNet-b3     &11.4873 & $D_\textnormal{div}$ &0.7642 &0.9262 &0.9982 &1.1570 &0.7940 &0.8942 &0.9789 &0.9755 &0.6751 &1.0670 &1.0302 &0.8345 &1.0010 &0.5790 &0.9349 &0.6927 &0.8667 &1.1565 &0.8798 &2.6791 \\
~   &~  & $D_\textnormal{cor}$ &0.0000 &0.0000 &0.0002 &0.0000 &0.0001 &0.0000 &0.0182 &0.0000 &0.0004 &0.0004 &0.0000 &0.0000 &0.0005 &0.0000 &0.0015 &0.0000 &0.0000 &0.0003 &0.0000 &0.0000 \\
EfficientNet-b5     &29.3940 & $D_\textnormal{div}$ &0.8967 &0.8822 &0.9572 &0.8732 &0.8544 &0.8792 &1.1918 &0.4856 &0.6623 &0.8086 &0.7450 &1.0365 &0.5031 &0.7685 &1.1752 &0.5001 &0.8214 &0.5249 &1.0050 &0.6596 \\
~  &~  & $D_\textnormal{cor}$ &0.0100 &0.0001 &0.0000 &0.0000 &0.0002 &0.0015 &0.0001 &0.0000 &0.0013 &0.0003 &0.0041 &0.0000 &0.0048 &0.0046 &0.0047 &0.0000 &0.0011 &0.0011 &0.0025 &0.0\\
\bottomrule
\end{tabular}
\end{threeparttable}
\end{adjustbox}
\caption{Different architectures on PACS.}
\label{table:arch-pacs}
\end{table}

\section{Estimation Results with Error Bars}
\label{app:estimation-results}
The table below lists all the results that have been plotted in \Cref{fig:estimation} with standard error bars. The statistics are averaged over five runs of different weight initializations and training/validation splits.
\begin{table}[H]
    \small
    \centering
    \begin{tabular}{lcc}
        \toprule
        Dataset         & Div. shift      & Cor. shift      \\
        \midrule
        i.i.d\onedot data     & 0.00 $\pm$ 0.00 & 0.00 $\pm$ 0.00 \\
        PACS            & 0.81 $\pm$ 0.05 & 0.00 $\pm$ 0.00 \\
        Office-Home     & 0.17 $\pm$ 0.02 & 0.00 $\pm$ 0.00 \\
        Terra Incognita & 0.92 $\pm$ 0.06 & 0.00 $\pm$ 0.00 \\
        Camelyon        & 1.07 $\pm$ 0.80 & 0.00 $\pm$ 0.00 \\
        DomainNet       & 0.43 $\pm$ 0.03 & 0.08 $\pm$ 0.00 \\
        Colored MNIST   & 0.00 $\pm$ 0.00 & 0.55 $\pm$ 0.13 \\
        CelebA          & 0.02 $\pm$ 0.02 & 0.29 $\pm$ 0.04 \\
        NICO            & 0.11 $\pm$ 0.06 & 0.24 $\pm$ 0.08 \\
        ImageNet-A      & 0.02 $\pm$ 0.00 & 0.06 $\pm$ 0.05 \\
        ImageNet-R      & 0.06 $\pm$ 0.01 & 0.21 $\pm$ 0.01 \\
        ImageNet-V2     & 0.01 $\pm$ 0.01 & 0.49 $\pm$ 0.10 \\
        \bottomrule
    \end{tabular}
    \caption{Estimation of diversity and correlation shift.}
    \label{tab:numeric-div-cor}
\end{table}

\section{Datasets}
\label{app:datasets}

Our benchmark includes the following datasets dominated by diversity shift:
\begin{itemize}
    \item \textbf{PACS}~\cite{li2017deeper} is a common DG benchmark. The datasets contain images of objects and creatures depicted in different styles, which are grouped into four domains, $\{ \textnormal{photos}, \textnormal{art}, \textnormal{cartoons}, \textnormal{sketches} \}$. In total, it consists of $9,991$ examples of dimension $(3, 224, 224)$ and $7$ classes.
    \item \textbf{OfficeHome}~\cite{venkateswara2017deep} is another common DG benchmark similar to PACS.
    It has four domains: $\{ \textnormal{art}, \textnormal{clipart}, \textnormal{product}, \textnormal{real} \}$, containing $15,588$ examples of dimension $(3, 224, 224)$ and $65$ classes.
    \item \textbf{Terra Incognita}~\cite{beery2018recognition} contains photographs of wild animals taken by camera traps at different locations in nature, simulating a real-world scenario for OoD generalization.
    Following DomainBed \cite{gulrajani2021in}, our version of this dataset only utilize four of the camera locations, $\{ \textnormal{L100}, \textnormal{L38}, \textnormal{L43}, \textnormal{L46} \}$, covering $24,788$ examples of dimension $(3, 224, 224)$ and $10$ classes.
    \item \textbf{Camelyon17-WILDS}~\cite{koh2020wilds} is a patch-based variant of the Camelyon17 dataset \cite{bandi2019from} curated by WILDS \cite{koh2020wilds}.
    The dataset contains histopathological image slides collected and processed by different hospitals. Data variation among these hospitals arises from sources like differences in the patient population or in slide staining and image acquisition.
    It contains $455,954$ examples of dimension $(3, 224, 224)$ and $2$ classes collected and processed by $5$ hospitals.
\end{itemize}
On the other hand, these datasets are dominated by correlation shift:
\begin{itemize}
    \item \textbf{Colored MNIST}~\cite{arjovsky2019invariant} is a variant of the MNIST handwritten digit classification dataset \cite{lecun1998gradient}. The digits are colored either red or green in a way that each color is strongly correlated with a class of digits. The correlation is different during training and test time, which leads to spurious correlation.
    Following IRM \cite{arjovsky2019invariant}, this dataset contains $60,000$ examples of dimension $(2, 14, 14)$ and $2$ classes.
    \item \textbf{NICO}~\cite{he2020towards} consists of real-world photos of animals and vehicles captured in a wide range of contexts such as ``in water'', ``on snow'' and ``flying''.
    There are 9 or 10 different contexts for each class of animal and vehicle.
    Our version of this dataset simulates a scenario where animals and vehicles are spuriously correlated with different contexts.
    More specifically, we make use of both classes appeared in four overlapped contexts: ``on snow'', ``in forest'', ``on beach'' and ``on grass'' to construct training and test environments (as in \Cref{tab:nico-split}) that are similar to the setting of Colored MNIST.
    In total, our split consists of $4,080$ examples of dimension $(3, 224, 224)$ and $2$ classes.
    \begin{table}[H]
        \centering
        \small
        \label{tab:nico-split}
        \begin{tabular}{c c c c c c}
            \toprule
            \textbf{Environment} & \textbf{Class} & \textbf{on snow} & \textbf{in forest} & \textbf{on beach} & \textbf{on grass}  \\
            \midrule
            Training 1 & Animal  & 10  & 400 & 10  & 400 \\
                                 & Vehicle & 400 & 10  & 400 & 10  \\
            \midrule
            Training 2 & Animal  & 20  & 390 & 20  & 390 \\
                                 & Vehicle & 390 & 20  & 390 & 20  \\
            \midrule
            Validation & Animal  & 50  & 50  & 50  & 50  \\
                                 & Vehicle & 50  & 50  & 50  & 50  \\
            \midrule
            Test       & Animal  & 90  & 10  & 90  & 10  \\
                                 & Vehicle & 10  & 90  & 10  & 90  \\
            \bottomrule
        \end{tabular}
        \captionsetup{width=.85\linewidth}
        \caption{Environment splits of NICO and the number of examples in each group.}
    \end{table}
    \item \textbf{CelebA}~\cite{liu2015deep} is a large-scale face attributes dataset with more than 200K celebrity images, each with 40 attribute annotations. It has been widely investigated in AI fairness studies~\cite{quadrianto2019discovering, wang2020fair, chuang2021fair} as well as OoD generalization research~\cite{sagawa2019distributionally, pezeshki2020gradient}. Similar to the setting proposed by \cite{sagawa2019distributionally}, our version treats ``hair color'' as the classification target and ``gender'' as the spurious attribute.
    We consider a subset of 27,040 images divided into three environments, simulating the setting of Colored MNIST (where there is large correlation shift). We make full use of the group (blond-hair males) that has the least number of images. See \Cref{tab:celeba-split} for more details regarding the environment splits.
    \begin{table}[H]
        \centering
        \small
        \begin{tabular}{c c c c}
            \toprule
            \textbf{Environment} & \textbf{Class} & \textbf{Male} & \textbf{Female} \\
            \midrule
            Training 1 & blond     & 462    & 11,671 \\
                       & not blond & 11,671 & 462    \\
            \midrule
            Training 2 & blond     & 924    & 11,209 \\
                       & not blond & 11,209 & 924    \\
            \midrule
            Test       & blond     & 362  & 362  \\
                       & not blond & 362  & 362  \\
            \bottomrule
        \end{tabular}
        \captionsetup{width=.85\linewidth}
        \caption{Environment splits of CelebA and the number of examples in each group.}
        \label{tab:celeba-split}
    \end{table}
\end{itemize}

\section{Model Selection Methods}
\label{app:model-selection}

Among the three commonly-used model selection methods we used, \emph{training-domain validation} and \emph{test-domain validation} are concisely described in \cite{gulrajani2021in} as follows:
\begin{itemize}
    \item \textbf{Training-domain validation.}
        We split each training domain into training and validation subsets. We train models using the training subsets, and choose the model maximizing the accuracy on the union of validation subsets. This strategy assumes that the training and test examples follow a similar distribution.
    \item \textbf{Test-domain validation.}
        We choose the model maximizing the accuracy on a validation set that follows the distribution of the test domain. We allow one query (the last checkpoint) per choice of hyperparameters, disallowing early stopping.
\end{itemize}

We use \emph{OoD validation} in place of \emph{leave-one-domain-out validation} (another method employed by \cite{gulrajani2021in}) out of two considerations: (i) the Camelyon17 dataset is an official benchmark listed in WILDS \cite{koh2020wilds}, which inherently comes with an OoD validation set; (ii) given $k$ training domains, leave-one-domain-out validation is computationally costly (especially when $k$ is large), increasing the number of experiments by $k-1$ times. Moreover, when $k$ is small (\eg $k=2$ in Colored MNIST), the leave-one-domain-out validation method heavily reduces the number of training examples accessible to the models.
\begin{itemize}
    \item \textbf{OoD validation.}
        We choose the model maximizing the accuracy on a validation set that follows \emph{neither} the distribution of the training domain or the test domain. This strategy assumes that the models generalizing well on the OoD validation set also generalize well on the test set.
\end{itemize}

\section{ImageNet-V2 Experiment}
\label{app:imagenetv2-exp}
Here we include the experiment results on ImageNet-V2 as an example for datasets exhibiting real-world distribution shift.
Experimental comparisons and discussions on ImageNet and its variants are not included in the main paper along with other datasets for three main reasons: \textbf{(i)} ImageNet is seldom considered in DG literature; \textbf{(ii)} the ImageNet variants are released as validation sets and the data size is relatively small (e.g., 10 images per class for ImageNet-V2); and \textbf{(iii)} most of the OoD generalization algorithms assume multiple training domains, however, there is no standard way to construct a multi-domain ImageNet.
Hence, we conduct experiments with the algorithms that do not make the multi-domain assumption on ImageNet (as training domain) and ImageNet-V2 (as test domain).
These algorithms are CORAL, SagNet, and RSC.
As previously shown in \cref{tab:benchmark-div}, they are superior or equivalent to ERM in terms of performance on datasets dominated by diversity shift.
In comparison, the dominant shift between ImageNet and ImageNet-V2 is the correlation shift.
The experiment result on ImageNet-V2 is shown in the table below.
As expected, ERM outperforms the other algorithms.
Note that the relative ranking of the algorithms is reversed from that in \cref{tab:benchmark-div}.
\begin{table}[h]
    \small
    \centering
    \begin{tabular}{cccc}
    \hline
    ERM & CORAL & SagNet & RSC \\
    \hline
    32.4 $\pm$ 0.2 & 31.9 $\pm$ 0.5 & 31.0 $\pm$ 0.4 & 28.3 $\pm$ 1.4\\
    \hline
    \end{tabular}
    \caption{Performance of ERM and OoD generalization algorithms on ImageNet-V2.}
    \label{tab:imagenetv2-exp}
\end{table}

\newpage
\section{Hyperparameter Search Space}
\label{app:hparams-search}

For convenient comparison, we follow the search space proposed in \cite{gulrajani2021in} whenever applicable.

\begin{table}[htbp]
\centering
\scalebox{0.9}{
\begin{tabular}{llll}
\toprule
\textbf{Condition} & \textbf{Hyperparameter} & \textbf{Default value} & \textbf{Random distribution} \\
\midrule
ResNet & learning rate & 0.00005 & $10^{\textnormal{Uniform}(-5,-3.5)}$ \\
       & batch size & 32 & $2^{\textnormal{Uniform}(3,5.5)}$ \\
       & batch size (if CelebA) & 48 & $2^{\textnormal{Uniform}(4.5,6)}$ \\
       & batch size (if ARM) & 8 & 8 \\
       & ResNet dropout & 0 & 0 \\
       & generator learning rate & 0.00005 & $10^{\textnormal{Uniform}(-5,-3.5)}$ \\
       & discriminator learning rate & 0.00005 & $10^{\textnormal{Uniform}(-5,-3.5)}$ \\
       & weight decay & 0 & $10^{\textnormal{Uniform}(-6,-2)}$ \\
       & generator weight decay & 0 & $10^{\textnormal{Uniform}(-6,-2)}$ \\
\midrule
MLP    & learning rate & 0.001 & $10^{\textnormal{Uniform}(-4.5,-3.5)}$ \\
       & batch size & 64 & $2^{\textnormal{Uniform}(3,9)}$ \\
       & generator learning rate & 0.001 & $10^{\textnormal{Uniform}(-4.5,-2.5)}$ \\
       & discriminator learning rate & 0.001 & $10^{\textnormal{Uniform}(-4.5,-2.5)}$ \\
       & weight decay & 0 & 0 \\
       & generator weight decay & 0 & 0 \\
\midrule
IRM	& lambda & 100 &$10^{\textnormal{Uniform}(-1,5)}$	 \\
&	iterations annealing & 500&$10^{\textnormal{Uniform}(0,4)}$\\
&	iterations annealing (if CelebA) & 500&$10^{\textnormal{Uniform}(0,3.5)}$\\
\midrule
VREx	& lambda & 10 &$10^{\textnormal{Uniform}(-1,5)}$	 \\
&	iterations annealing & 500&$10^{\textnormal{Uniform}(0,4)}$\\
&	iterations annealing (if CelebA) & 500&$10^{\textnormal{Uniform}(0,3.5)}$\\
\midrule
Mixup&alpha & 0.2 &$10^{\textnormal{Uniform}(0,4)}$ \\
\midrule
GroupDRO& eta	 &	0.01& $10^{\textnormal{Uniform}(-1,1)}$\\
\midrule
MMD& gamma	 &	1& $10^{\textnormal{Uniform}(-1,1)}$\\
\midrule
CORAL& gamma	 &	1& $10^{\textnormal{Uniform}(-1,1)}$\\
\midrule
MTL& ema	 &	0.99& RandomChoice($[0.5,0.9,0.99,1]$)\\
\midrule
DANN& lambda	 &	1.0& $10^{\textnormal{Uniform}(-2,2)}$\\
 & disc weight decay	 &	0& $10^{\textnormal{Uniform}(-6,2)}$\\
  & discriminator steps	 &	1& $2^{\textnormal{Uniform}(0,3)}$\\
 & gradient penalty	 &	 0& $10^{\textnormal{Uniform}(-2,1)}$\\
 & Adam $\beta_1$	 &	0.5& RandomChoice($[0,0.5]$)\\
\midrule
MLDG & beta & 1 & $10^{\textnormal{Uniform}(-1,1)}$ \\
\midrule
RSC & feature drop percentage & 1/3 & Uniform($0, 0.5$) \\
    & batch drop percentage   & 1/3 & Uniform($0, 0.5$) \\
\midrule
SagNet & adversary weight & 0.1 & $10^{\textnormal{Uniform}(-2,1)}$ \\
\midrule
ANDMask & tau & 1 & Uniform($0.5, 1.0$) \\
\midrule
IGA & penalty & 1,000 & $10^{\textnormal{Uniform}(1,5)}$ \\
\midrule
ERDG & discriminator learning rate & 0.00005  & $10^{\textnormal{Uniform}(-5,-3.5)}$ \\
     & $T^\prime$ learning rate    & 0.000005 & $10^{\textnormal{Uniform}(-6,-4.5)}$ \\
     & $T^{\hphantom{\prime}}$ learning rate & 0.000005 & $10^{\textnormal{Uniform}(-6,-4.5)}$ \\
     & adversarial loss weight   & 0.5 & $10^{\textnormal{Uniform}(-2,0)}$ \\
     & entropy regularization loss weight & 0.01 & $10^{\textnormal{Uniform}(-4,-1)}$ \\
     & cross-entropy loss weight & 0.05 & $10^{\textnormal{Uniform}(-3,-1)}$ \\
\bottomrule
\end{tabular}}
\caption{Hyperparameters, their default values and distributions for random search.}
\label{tab:hparams-search}
\end{table}

\end{document}